\def\real{{\cal R}}
\def\eg{\emph{e.g}.}
\newenvironment{proof}{\paragraph{Proof:}}{$\blacksquare$}
\DeclareMathOperator{\diag}{diag}
\DeclareMathOperator{\reluop}{ReLU}
\DeclareMathOperator{\vol}{vol}
\def\relu{\ensuremath \reluop}
\def\relus{{\ensuremath{\operatorname{\mathop{ReLU6}\,}}}}
\def\conv{{\ensuremath{\operatorname{\mathop{conv2d}\,}}}}
\def\depthwise{{\ensuremath{\operatorname{\mathop{dwise}\,}}}}
\newtheorem{theorem}{Theorem}
\newtheorem{lemma}{Lemma}
\newcommand{\oldtext}[1]{{\color{blue}{#1}}}
\ifcvprfinal\pagestyle{empty}\fi
\begin{document}

\title{MobileNetV2: Inverted Residuals and Linear Bottlenecks}

\author{
\begin{tabular}{c c c c c}
Mark Sandler & Andrew Howard & Menglong
Zhu & Andrey Zhmoginov & Liang-Chieh Chen \\
\multicolumn{5}{c}{Google Inc.} \\
\multicolumn{5}{c}{\{sandler, howarda, menglong, azhmogin, lcchen\}@google.com} \\
\end{tabular}
}

\maketitle

\ifcvprfinal\thispagestyle{empty}\fi

\begin{abstract}
In this paper we describe a new mobile architecture, \mbox{MobileNetV2}, that improves the state of the art performance of mobile models on multiple tasks and benchmarks as well as across a spectrum of different model sizes.
We also describe efficient ways of applying these mobile models to object detection in a novel framework we call \mbox{SSDLite}.
Additionally, we demonstrate how to build mobile semantic segmentation models through a reduced form of \mbox{DeepLabv3} which we call Mobile \mbox{DeepLabv3}.

 is based on an inverted residual structure where the shortcut connections are between the thin bottleneck layers. The intermediate expansion layer uses lightweight depthwise convolutions to filter features as a source of non-linearity.  Additionally, we find that it is important to remove non-linearities in the narrow layers in order to maintain representational power. We demonstrate that this improves performance and provide an intuition that led to this design.

Finally, our approach allows decoupling of the input/output domains from the expressiveness of the transformation, which 
provides a convenient framework for further analysis.
We measure our performance on \mbox{ImageNet}~\cite{Russakovsky:2015:ILS:2846547.2846559} classification, COCO object detection \cite{COCO}, VOC image segmentation \cite{PASCAL}. We evaluate the trade-offs between accuracy, and number of operations measured by multiply-adds (MAdd), as well as actual latency, and the number of parameters. 
\end{abstract}

\section{Introduction}
Neural networks have revolutionized many areas of machine intelligence, enabling superhuman accuracy for challenging image recognition tasks. However, the drive to improve accuracy often comes at a cost: modern state of the art networks require high computational resources beyond the capabilities of many mobile and embedded applications. 

This paper introduces a new neural network architecture that is specifically tailored for mobile and resource constrained environments. Our network pushes the state of the art for mobile tailored computer vision models, by significantly decreasing the number of operations and memory needed while retaining the same accuracy.

Our main contribution is a novel layer module: the inverted residual with linear bottleneck. This module takes as an input a low-dimensional compressed representation which is first expanded to high dimension and filtered with a lightweight depthwise convolution. Features are subsequently projected back to a low-dimensional representation with a {\em linear convolution}. The official implementation is available as part of TensorFlow-Slim model library in~\cite{MobilenetV2-implementation}.

This module can be efficiently implemented using standard operations in any modern framework and allows
our models to beat state of the art along multiple performance points using standard benchmarks.
Furthermore, this convolutional module is particularly suitable for mobile designs, because it allows to significantly reduce the memory footprint needed during inference by never fully materializing large intermediate tensors. This reduces the need for main memory access in many embedded hardware designs, that provide small amounts of very fast software controlled cache memory. 

\section{Related Work}
Tuning deep neural architectures to strike an optimal balance between accuracy and performance has been an area of active research for the last several years.
Both manual architecture search and improvements in training algorithms, carried out by numerous teams has lead to dramatic improvements over early designs  such as \mbox{AlexNet}~\cite{AlexNet}, \mbox{VGGNet}~\cite{VGGNet}, \mbox{GoogLeNet}~\cite{GoogleNet}. , and \mbox{ResNet}~\cite{ResNet}.
Recently there has been lots of progress in algorithmic architecture exploration included hyper-parameter optimization \cite{BergstraRandomSearch, BayesianOptimizationML, BayesianOptimizationDNN} as well as various methods of network pruning \cite{BrainSurgeon,BrainDamage,Han2015,Han2016,Guo2016,LiPruning} and connectivity learning \cite{ConnectivityLearning,BudgetedSuperNetworks}.
A substantial amount of work has also been dedicated to changing the connectivity structure of the internal convolutional blocks such as in \mbox{ShuffleNet} \cite{ShuffleNet2017} or introducing sparsity \cite{PowerOfSparsity} and others \cite{SpatialBottleneck2016}. 

Recently, \cite{LearningToLearnScale,GeneticCNN,EvolutionImageClassifiers,NAS_reinforcement}, opened up a new direction of bringing optimization methods including genetic algorithms and reinforcement learning to architectural search.
However one drawback is that the resulting networks end up very complex.
In this paper, we pursue the goal of developing better intuition about how neural networks operate and use that to guide the simplest possible network design.
Our approach should be seen as complimentary to the one described in \cite{LearningToLearnScale} and related work.
In this vein our approach is similar to those taken by \cite{ShuffleNet2017, SpatialBottleneck2016} and allows to further improve the performance, while providing a glimpse on its internal operation.
Our network design is based on \mbox{MobileNetV1}~\cite{MobilenetV1}. It retains its simplicity and  does not require any special operators while significantly improves its accuracy, achieving state of the art on multiple image classification and detection tasks for mobile applications. 
\newcommand{\fig}[1]{Fig.~\ref{#1}}
\newcommand{\beveco}{Best viewed in color.}
\def\interior{\operatorname{interior}}

\section{Preliminaries, discussion and intuition}

\subsection{Depthwise Separable Convolutions}
Depthwise Separable Convolutions are a key building block for many efficient neural network architectures \cite{MobilenetV1,Chollet_2017_CVPR,ShuffleNet2017} and we use them in the present work as well.
The basic idea is to replace a full convolutional operator with a factorized version that splits convolution into two separate layers.
The first layer is called a depthwise convolution, it performs lightweight filtering by applying a single convolutional filter per input channel.
The second layer is a $1 \times 1$ convolution, called a pointwise convolution, which is responsible for building new features through computing linear combinations of the input channels. 

Standard convolution takes an $h_i\times w_i\times d_i$ input tensor $L_i$, and applies convolutional  kernel $K\in \real^{k\times k \times d_i \times d_j}$ to produce an $h_i\times w_i\times d_j$ output tensor $L_j$.
Standard convolutional layers have the computational cost of $h_i \cdot w_i \cdot d_i \cdot d_j \cdot k \cdot k$. %

Depthwise separable convolutions are a drop-in replacement for standard convolutional layers.
Empirically they work almost as well as regular convolutions but only cost:
\begin{equation}  
  h_i \cdot w_i \cdot d_i (k^2 + d_j)
  \label{eq:depthwise}
\end{equation}
which is the sum of the depthwise and $1 \times 1$ pointwise convolutions.
Effectively depthwise separable convolution reduces computation compared to traditional layers by almost a factor of $k^2$\footnote{more precisely, by a factor $k^2 d_j/(k^2 + d_j)$}.
\mbox{MobileNetV2} uses $k=3$ ($3 \times 3$ depthwise separable convolutions) so the computational cost is $8$ to $9$ times smaller than that of standard convolutions at only a small reduction in accuracy \cite{MobilenetV1}.

\subsection{Linear Bottlenecks}
\label{sec:bottlenecks}
Consider a deep neural network consisting of $n$ layers $L_i$ each of which has an activation tensor of dimensions $h_i \times w_i \times d_i$.
Throughout this section we will be discussing the basic properties of these
activation tensors, which we will treat as containers of $h_i \times
w_i$ ``pixels'' with $d_i$ dimensions.
Informally, for an input set of real images, we say that the set of layer activations (for any layer $L_i$) forms a ``manifold of interest''. It has been long assumed that manifolds of interest in neural networks could be embedded in low-dimensional subspaces.
In other words, when we look at all individual $d$-channel pixels of a deep 
convolutional layer, the information encoded in those values actually lie in some manifold, which in turn is embeddable into a low-dimensional subspace\footnote{Note that dimensionality of the manifold differs from the dimensionality of a subspace that could be embedded via a linear transformation.}.

At a first glance, such a fact could then be captured and exploited by simply reducing the dimensionality of a layer thus reducing the dimensionality of the operating space.
This has been successfully exploited by \mbox{MobileNetV1}~\cite{MobilenetV1} to effectively trade off between computation and accuracy via a width multiplier parameter, and has been incorporated into efficient model designs of other networks as well \cite{ShuffleNet2017}.
Following that intuition, the width multiplier approach allows one to reduce the dimensionality of the activation space until the manifold of interest spans
this entire space.
However, this intuition breaks down when we recall that deep convolutional neural networks actually have non-linear per coordinate transformations, such as $\relu$.
For example, $\relu$ applied to a line in 1D space produces a 'ray', where as in $\real^n$ space, it generally results in a piece-wise linear curve with $n$-joints.

It is easy to see that in general if a result of a layer transformation $\relu (B x)$ has a non-zero volume $S$, the points mapped to $\interior{S}$ are obtained via a linear transformation $B$ of the input, thus indicating that the part of the input space corresponding to the full dimensional output, is limited to a linear transformation.
In other words, deep networks only have the power of a linear classifier on the non-zero volume part of the output domain.
We refer to supplemental material for a more formal statement. 

On the other hand, when $\relu$ collapses the channel, it inevitably loses information in {\it that channel}. However if we have lots of channels, and there is a a structure in the activation manifold that information might still be preserved in the other channels. 
In supplemental materials, we show that if the input manifold can be embedded into a significantly lower-dimensional subspace of the activation space then the $\relu$ transformation preserves the information while introducing the needed complexity into the set of expressible functions.

\begin{figure}
    \includegraphics[width=.98\linewidth,keepaspectratio=true]{./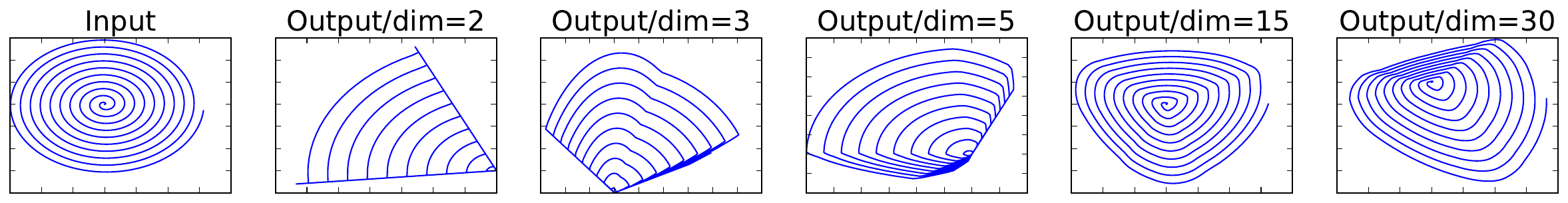}
    \caption{\small{
        Examples of $\relu$ transformations of low-dimensional manifolds embedded in
        higher-dimensional spaces.
        In these examples the initial spiral is embedded into an $n$-dimensional space using random matrix $T$ followed by $\relu$, and then projected back to the 2D space using $T^{-1}$. In examples above $n=2,3$ result in
        information loss where certain points of the manifold collapse into each other,
        while for $n=15$ to $30$ the  transformation is highly non-convex.}
    }
    \label{fig:a_relu_b_spiral}
\end{figure}

\begin{figure}[!t]
  \centering
  \begin{subfigure}{.2\textwidth}
    \centering
    \caption {Regular}
    \includegraphics[width=.6\linewidth,keepaspectratio=true]{./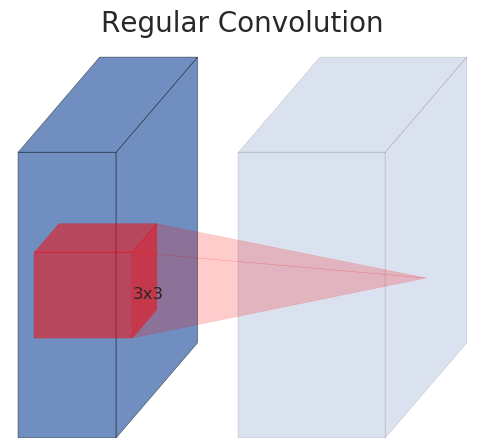}
  \end{subfigure}
  \begin{subfigure}{.2\textwidth}
    \caption {Separable}
    \includegraphics[width=\linewidth,keepaspectratio=true]{./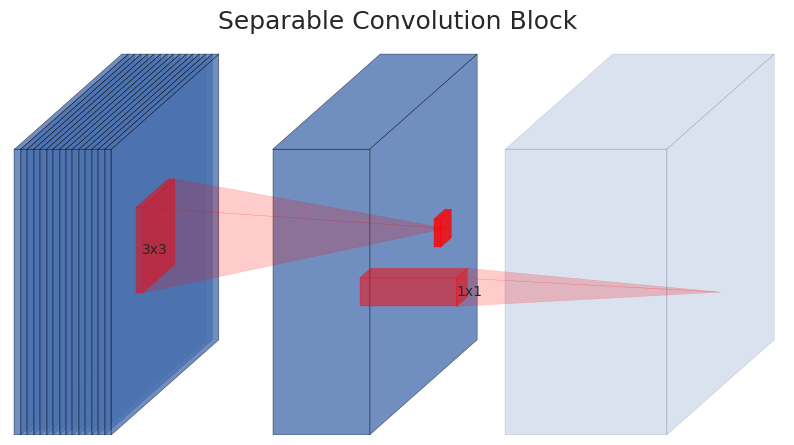}
  \end{subfigure}

\vspace{10pt}
  \begin{subfigure}{.2\textwidth}
    \caption{Separable with linear bottleneck}
    \label{fig:bneck}
    \includegraphics[width=\linewidth,keepaspectratio=true]{./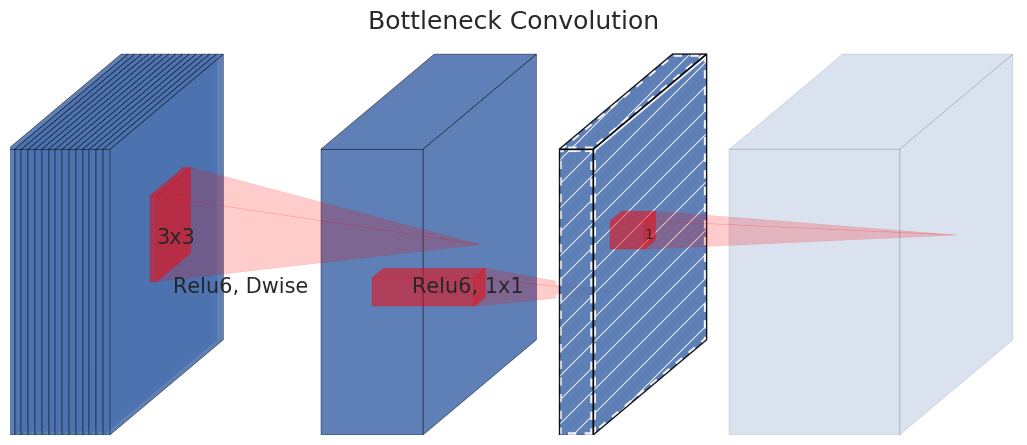}
  \end{subfigure} \hspace{15pt}
  \begin{subfigure}{.2\textwidth}
    \caption{Bottleneck with expansion layer}
    \label{fig:exp}
    \includegraphics[width=\linewidth,keepaspectratio=true]{./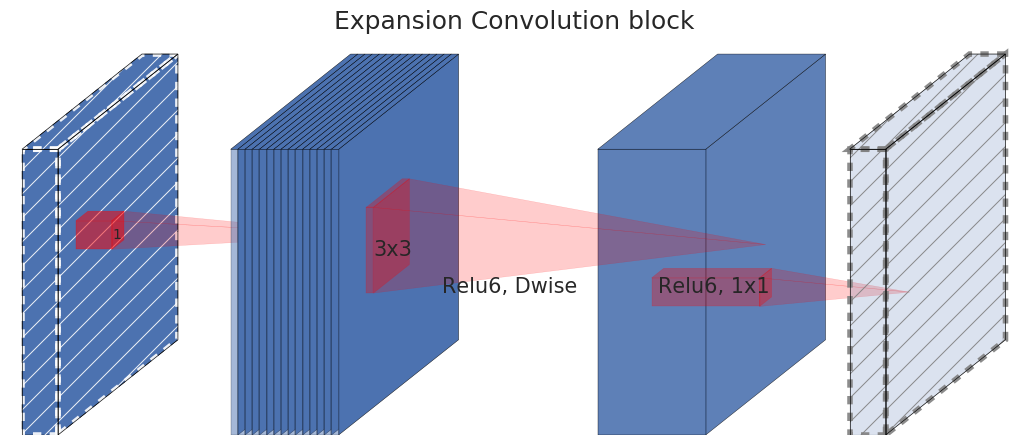}
  \end{subfigure}
  \caption{\small{
    Evolution of separable convolution blocks. The diagonally hatched texture indicates
    layers that do not contain non-linearities.
    The last (lightly colored) layer indicates the beginning of the next block. Note: ~\ref{fig:exp} and
    ~\ref{fig:bneck} are equivalent blocks when stacked. \beveco }
  }
\end{figure}

To summarize, we have highlighted two properties that are indicative of 
the requirement that the manifold of interest should lie in a low-dimensional 
subspace of the higher-dimensional activation space:

\begin{enumerate}
  \item If the manifold of interest remains non-zero volume after $\relu$ transformation, it corresponds to a linear transformation.
  \item $\relu$ is capable of preserving complete information about the input manifold, but only if the input manifold lies in a low-dimensional subspace of the input space.
\end{enumerate}

These two insights provide us with an empirical hint for optimizing existing neural architectures: assuming the manifold of interest is low-dimensional we can capture this by inserting {\em linear bottleneck} layers into the convolutional blocks.
Experimental evidence suggests that using linear layers is crucial as it prevents non-linearities from destroying too much information.
In Section~\ref{sec:experiments}, we show empirically that using non-linear layers in bottlenecks indeed hurts the performance by several percent, further validating our hypothesis\footnote{We note that in the presence of shortcuts the information loss is actually less strong.}. We note that similar reports where non-linearity was helped were reported in \cite{DeepPyramidalNetworks} where non-linearity was removed from the input of the traditional residual block and that lead to improved performance on CIFAR dataset.

For the remainder of this paper we will be utilizing bottleneck convolutions.
We will refer to the ratio between the size of the input bottleneck and the inner size as the \emph{expansion ratio}.

\subsection{Inverted residuals}
\begin{figure}
  \begin{subfigure}[t]{.22\textwidth}
    \caption {Residual block}
    \includegraphics[width=\linewidth,keepaspectratio=true]{./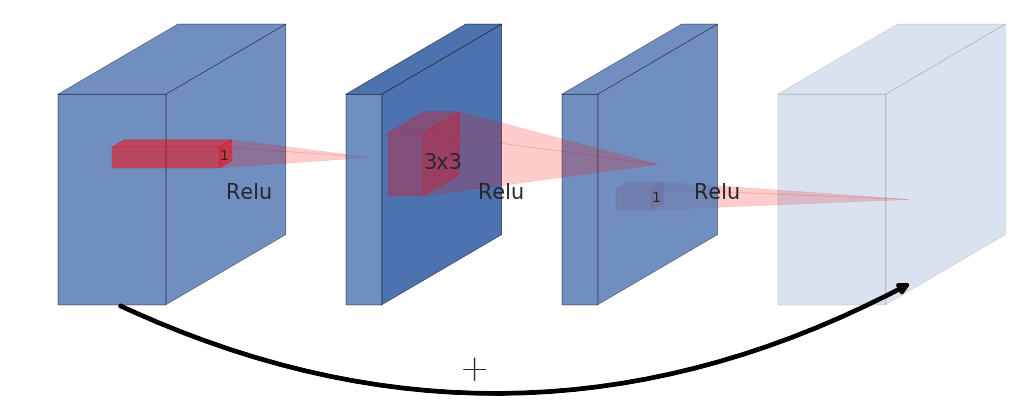}
  \end{subfigure}
  \begin{subfigure}[t]{.22\textwidth}
    \caption {Inverted residual block}
    \includegraphics[width=\linewidth,keepaspectratio=true]{./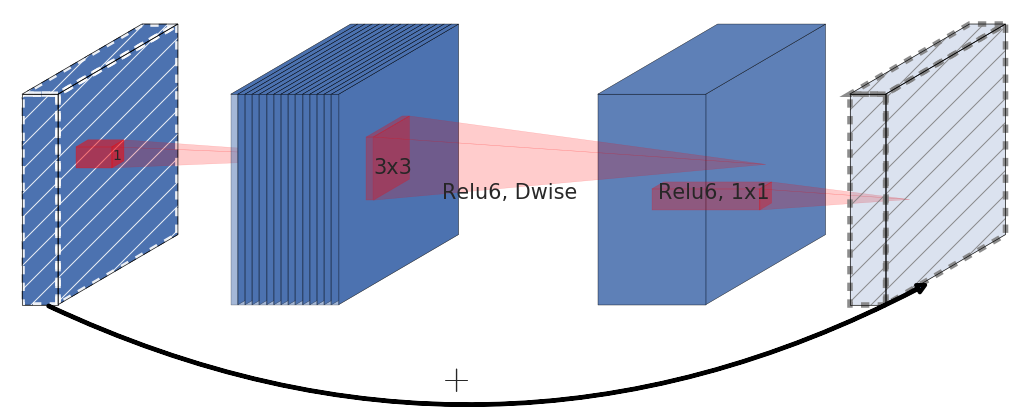}
    \label{fig:bottleneck-residual-block}
  \end{subfigure}
  \caption{
    The difference between residual block \cite{ResNet, ResNext2016} and inverted residual.
    Diagonally hatched layers do not use non-linearities.
    We use thickness of each block to indicate its relative number of channels.
    Note how classical residuals connects the layers with high number of channels, whereas the inverted residuals connect the bottlenecks. \beveco
  }
  \label{fig:bottleneck_with_residuals} 
\end{figure}

The bottleneck blocks appear similar to residual block where each block contains an input followed by several bottlenecks then followed by expansion \cite{ResNet}.
However, inspired by the intuition that the bottlenecks actually contain all the necessary information, while an expansion layer acts merely as an implementation detail that accompanies a non-linear transformation of the tensor, we use shortcuts directly between the bottlenecks.
Figure~\ref{fig:bottleneck_with_residuals} provides a schematic visualization of the difference in the designs.
The motivation for inserting shortcuts is similar to that of classical residual connections: we want to improve the ability of a gradient to propagate across  multiplier layers.
However, the inverted design is considerably more memory efficient (see Section~\ref{sec:implementation} for details), as well as works slightly better in our experiments. 

\paragraph{Running time and parameter count for bottleneck convolution}
The basic implementation structure is illustrated in Table~\ref{fig:bottlenec_block_table}.
For a block of size $h\times w$, expansion factor $t$ and kernel size $k$ with $d'$ input channels and $d''$ output channels, the total number of multiply add required is 
$h \cdot w \cdot d' \cdot t(d' + k^2 + d'')$.
Compared with \eqref{eq:depthwise} this expression has an extra term, as indeed we have an extra $1\times 1$ convolution, however the nature of our networks allows us to utilize much smaller input and output dimensions.
In Table~\ref{tab:my_label} we compare the needed sizes for each resolution between \mbox{MobileNetV1}, \mbox{MobileNetV2} and \mbox{ShuffleNet}.

\subsection{Information flow interpretation}
\label{sec:information-flow}
One interesting property of our architecture is that it provides a natural separation between the input/output {\em domains} of the building blocks (bottleneck layers), and the {\em layer transformation} -- that is a non-linear function that converts input to the output.
The former can be seen as the {\em capacity} of the network at each layer, whereas the latter as the {\em expressiveness}.
This is in contrast with traditional convolutional blocks, both regular and separable, where both expressiveness and capacity are tangled together and are functions of the output layer depth.

In particular, in our case, when inner layer depth is $0$ the underlying convolution is the identity function thanks to the shortcut connection.
When the expansion ratio is smaller than $1$, this is a classical residual convolutional block \cite{ResNet,ResNext2016}.
However, for our purposes we show that expansion ratio greater than $1$ is the most useful.

This interpretation allows us to study the expressiveness of the network separately from its capacity and we believe that further exploration of this separation is warranted to provide a better understanding of the network properties.

\section{Model Architecture}

Now we describe our architecture in detail. As discussed in the previous section the
basic building block is a bottleneck depth-separable convolution with residuals.
The detailed structure
of this block is shown in Table~\ref{fig:bottlenec_block_table}. The architecture of \mbox{MobileNetV2} contains the initial fully convolution layer with $32$ filters, followed by $19$ {\em residual bottleneck} layers described in the Table~\ref{mobilenet:arch}. We use $\relus$ as the non-linearity because of its robustness when used with low-precision computation \cite{MobilenetV1}. We always use kernel size $3\times 3$ as is standard for modern networks, and utilize dropout and batch normalization during training. 

With the exception of the first layer, we use constant expansion rate throughout the network. In our experiments we find that expansion rates between $5$ and $10$ result in nearly identical performance curves, with smaller networks being better off with slightly smaller expansion rates and larger networks having slightly better performance with larger expansion rates.

For all our main experiments we use expansion factor of $6$ applied to the size of the input tensor. For example, for a bottleneck layer that takes $64$-channel input tensor and produces a tensor with $128$ channels, the intermediate expansion layer is then $64 \cdot 6 = 384$ channels.

\begin{table}
\centering
    \begin{tabular}{c|c|c}
    Input & Operator & Output\\
    \toprule [0.2em]
    $h \times w \times k$ & 1x1 \conv, \relus & $h \times w \times (tk)$\\
    $h \times w \times tk$& 3x3 \depthwise s=$s$, \relus & $\frac{h}{s} \times \frac{w}{s} \times (tk)$\\ 
    $\frac{h}{s} \times \frac{w}{s} \times tk$ & linear 1x1 \conv & $\frac{h}{s} \times \frac{w}{s} \times k'$\\
    \toprule[0.2em]
    \end{tabular}

    \caption{{\em Bottleneck residual block} transforming from $k$ to $k'$ channels, with stride $s$, and expansion factor $t$.}

\label{fig:bottlenec_block_table}
\end{table}

\begin{table}[t]
\centering
\vspace{0pt}
\begin{tabular}{c|c|c|c|c|c}
\toprule[0.2em]
Input & Operator                           & $t$& $c$ & $n$ & $s$ \\
\toprule[0.2em]
$224^2\times3$ &    conv2d                  &  - &  32 & 1 & 2\\
$112^2\times32$ &    bottleneck    &  1 & 16   & 1 & 1 \\
$112^2\times16$ &   bottleneck     &  6 & 24   & 2 & 2 \\
$56^2\times24$ &   bottleneck      &  6 & 32   & 3 & 2 \\
$28^2\times32$ & bottleneck        &  6 & 64   & 4 & 2 \\
$14^2\times64$ &    bottleneck     &  6 & 96   & 3 & 1 \\
$14^2\times96$ &    bottleneck     &  6 & 160  & 3 & 2 \\
$7^2\times160$ &    bottleneck     &  6 & 320  & 1 & 1 \\
$7^2\times 320$ & conv2d 1x1       &  - & 1280 & 1 & 1 \\
$7^2\times1280$  & avgpool 7x7     &  - & -    & 1 & - \\   
$1\times1\times 1280$ & conv2d 1x1                      &  - & k    & -& \\
\toprule[0.2em]
\end{tabular}

\caption {
    \mbox{MobileNetV2} : Each line describes a sequence of 1 or more identical (modulo stride)  layers, repeated $n$ times.
    All layers in the same sequence have the same number $c$ of output channels.
    The first layer of each sequence has a stride $s$ and all others use stride $1$.
    All spatial convolutions use $3\times 3$ kernels. The expansion factor $t$ is always applied to the input size as described in Table~\ref{fig:bottlenec_block_table}.
}
\label{mobilenet:arch}
\end{table}

\begin{table}[t]
    \centering
\vspace{0pt}
\scalebox{0.75}{
\begin{tabular}{c|c|c|c}
\toprule[0.2em]
Size  &  MobileNetV1  &  MobileNetV2  &  ShuffleNet 
\\
& & & (2x,g=3) %
\\
\toprule[0.2em]
112x112 & 64/1600  &  16/400  & 32/800   %
\\
56x56  & 128/800  & 32/200   & 48/300    %
\\
28x28  & 256/400  & 64/100   &  400/600K %
\\
14x14  & 512/200  & 160/62   &  800/310  %
\\
7x7  &  1024/199   & 320/32  &  1600/156 %
\\
1x1  &  1024/2     & 1280/2  &  1600/3   %
\\

\toprule[0.1em]
{\bf max}  & 1600K & {\bf 400K}  & 600K %
\\
\toprule[0.2em]
\end{tabular}
}
\caption{
    The max number of channels/memory (in Kb) that needs to be materialized at each spatial resolution for different architectures.
    We assume 16-bit floats for activations.
    For \mbox{ShuffleNet}, we use $2x, g=3$ that matches the performance of \mbox{MobileNetV1} and \mbox{MobileNetV2}.
    For the first layer of  \mbox{MobileNetV2} and  \mbox{ShuffleNet} we can employ the trick described in Section~\ref{sec:implementation} to reduce memory requirement.
    Even though \mbox{ShuffleNet} employs bottlenecks elsewhere, the non-bottleneck tensors still need to be materialized due to the presence of shortcuts between the non-bottleneck tensors.
}
\label{tab:my_label}
\end{table}

\paragraph{Trade-off hyper parameters}
As in \cite{MobilenetV1} we tailor our architecture to different performance points, by using the input image resolution  and width multiplier as tunable hyper parameters, that can be adjusted depending on desired accuracy/performance trade-offs. Our primary network (width multiplier $1$, $224\times 224$), has a computational cost of 300 million multiply-adds and uses 3.4 million parameters. We explore the performance trade offs, for input resolutions from $96$ to $224$, and width multipliers of $0.35$ to $1.4$.
The network computational cost ranges from $7$ multiply adds to 585M MAdds, while the model size vary between 1.7M and 6.9M parameters.

One minor implementation difference, with \cite{MobilenetV1} is that for multipliers less than one, we apply width multiplier to all layers except the very last convolutional layer.
This improves performance for smaller models.

\section{Implementation Notes}

\begin{figure}
 \centering
 \begin{subfigure}[t]{0.31\textwidth}
  \centering
  \includegraphics[width=1.0\linewidth,keepaspectratio=true]{./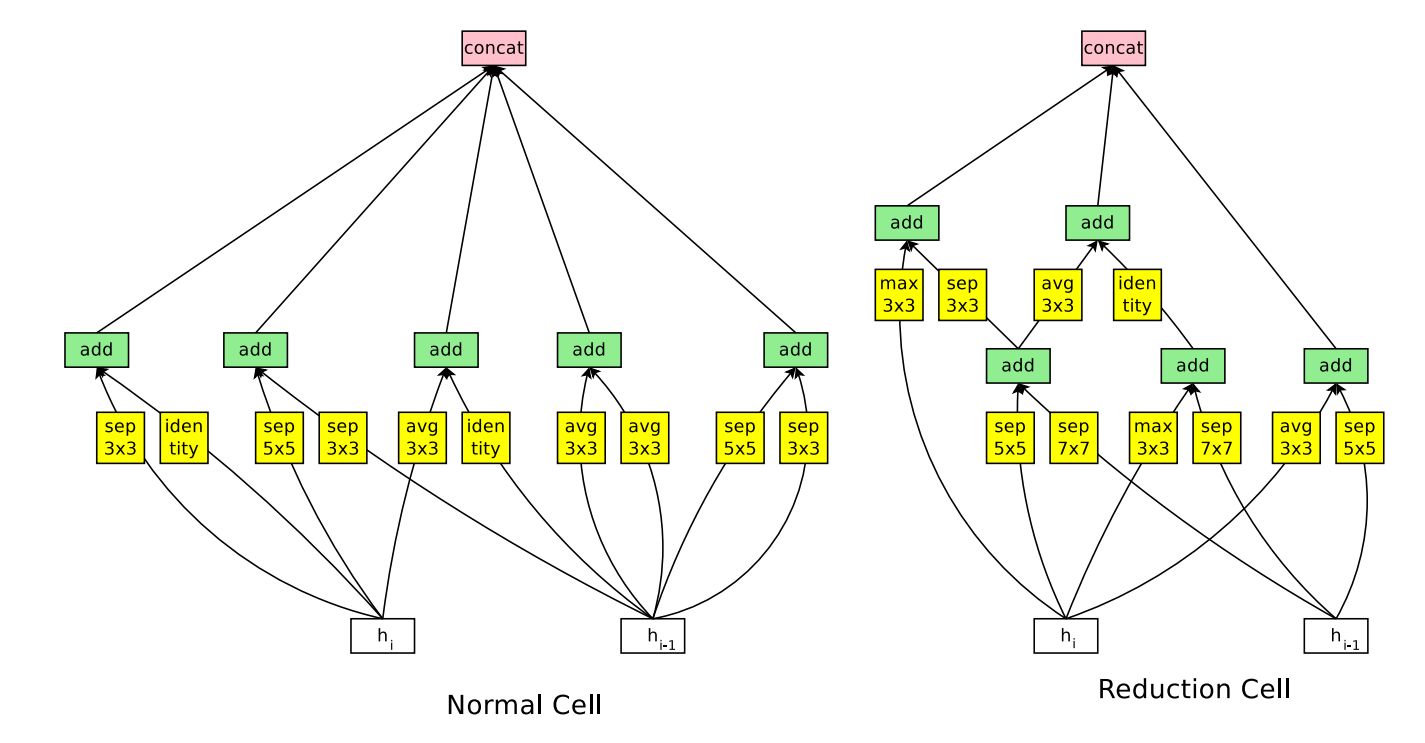}
  \caption{NasNet\cite{LearningToLearnScale}}
  \end{subfigure}
  \begin{subfigure}[t]{0.15\textwidth}
  \centering
  \includegraphics[width=1.0\linewidth,keepaspectratio=true]
  {./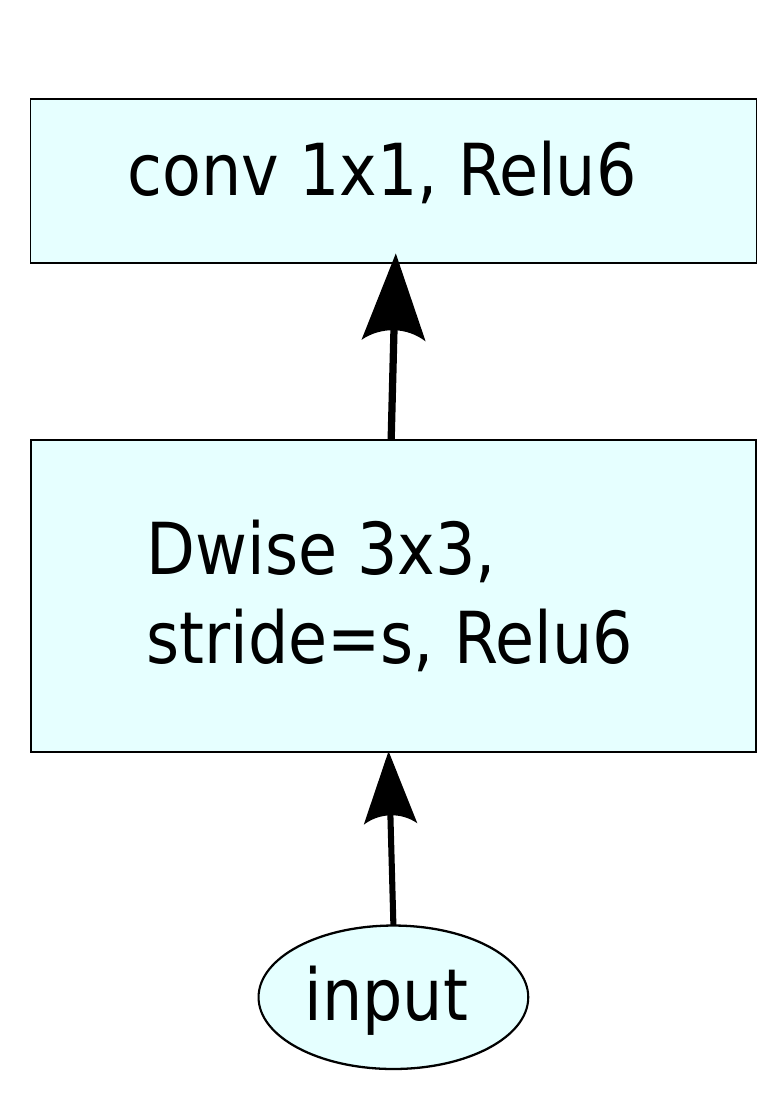}
  \caption{MobileNet\cite{MobilenetV1}}
  \end{subfigure} 
  \vspace{5pt}
  \begin{subfigure}[b]{0.23\textwidth}
  \centering
  \includegraphics[width=1.0\linewidth,keepaspectratio=true]{./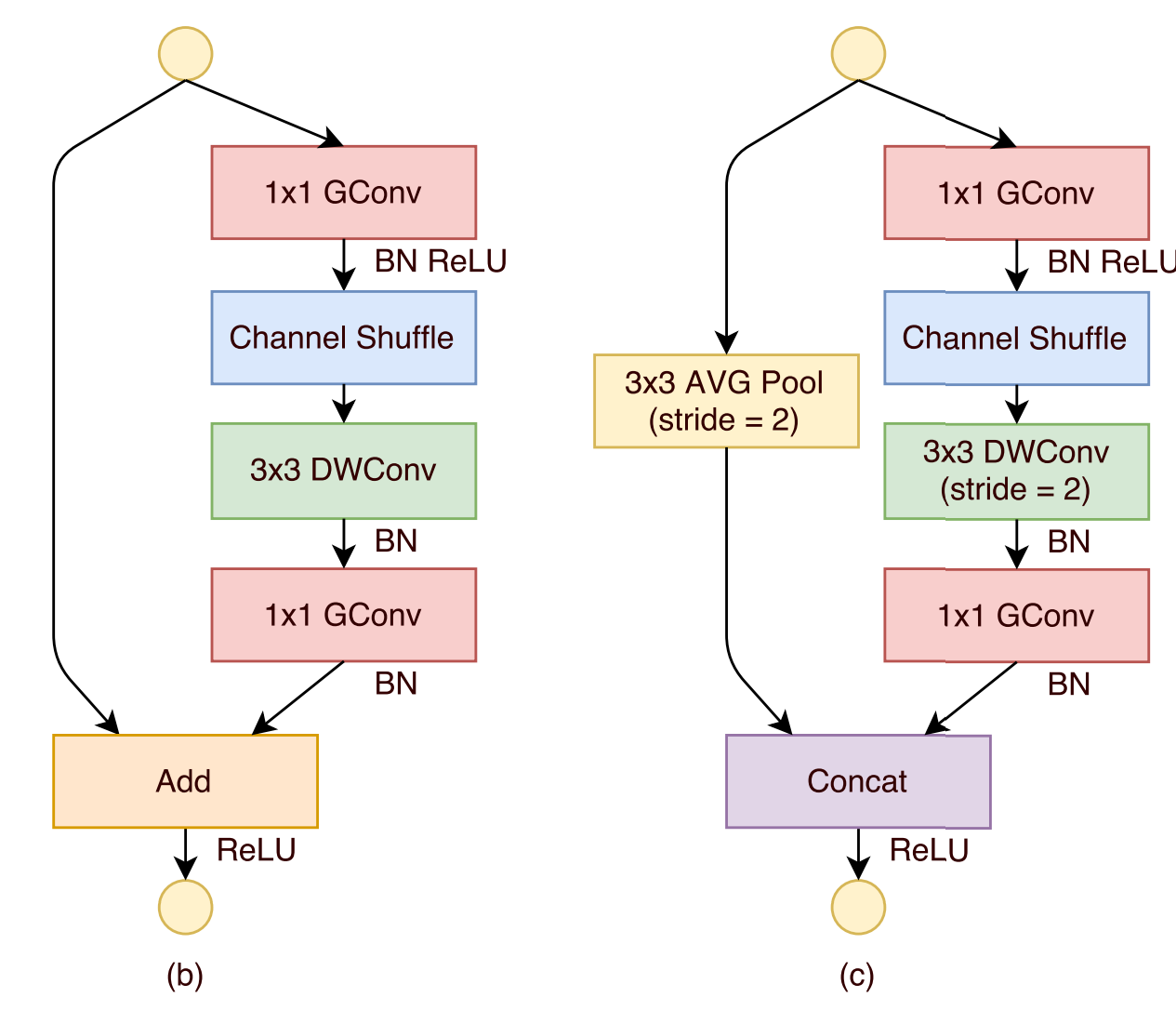}
  \caption{ShuffleNet~\cite{ShuffleNet2017}}
  \end{subfigure}
 \begin{subfigure}[b]{0.23\textwidth}
 \centering
  \includegraphics[width=1.0\linewidth,keepaspectratio=true]{./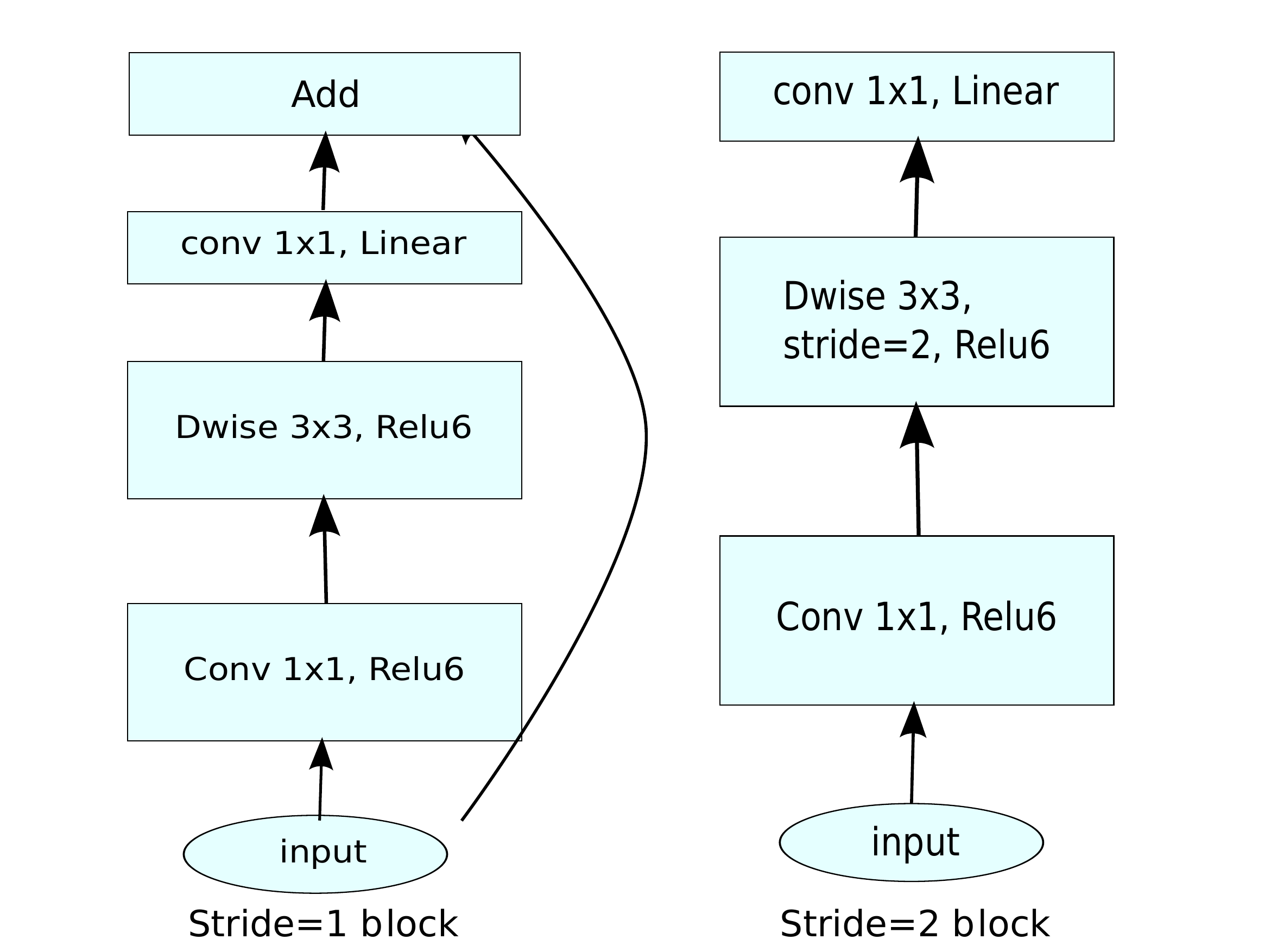}
 \caption{Mobilenet V2} 
  \end{subfigure}
  \caption{Comparison of convolutional blocks for different architectures. ShuffleNet uses Group Convolutions \cite{ShuffleNet2017} and shuffling, it  also  uses conventional residual approach   where inner blocks are narrower than output. ShuffleNet and NasNet illustrations are from respective papers.}
\end{figure}
\label{sec:implementation}

\subsection{Memory efficient inference}
The inverted residual bottleneck layers allow a particularly memory efficient implementation which is very important for mobile applications. A standard efficient implementation of inference that uses for instance
TensorFlow\cite{tensorflow2015-whitepaper} or Caffe~\cite{caffe},  builds a directed acyclic compute hypergraph $G$, consisting of edges representing the operations and nodes representing tensors of intermediate computation. The computation is scheduled in order to minimize the total number of tensors that needs to be stored in memory.
In the most general case, it searches over all plausible computation orders $\Sigma(G)$ and picks the one that minimizes
$$
 M(G) = \min_{\pi\in \Sigma(G)} \max_{i \in 1..n} \left[\sum_{A \in R(i, \pi, G)} |A|\right] + \text{size}(\pi_i).
$$
where $R(i, \pi, G)$ is the list of intermediate tensors that are connected to any of $\pi_{i}\dots \pi_{n}$ nodes, $|A|$ represents the size of the tensor $A$ and
$size(i)$ is the total amount of memory needed for internal storage during operation $i$.

For graphs that have only trivial parallel structure (such as residual connection), there is only one non-trivial feasible computation order, and thus the total amount and a  bound  on the memory needed for inference on compute graph $G$ can be simplified: 
\begin{equation}
\label{eq:naive-lower-bound-mu}
M(G) = \max_{op \in G} \left[\sum_{A \in \text{op}_{inp}} |A| + \sum_{B \in \text{op}_{out}} |B| + |op|\right]
\end{equation}
Or to restate, the amount of memory is simply the maximum total size of combined inputs and outputs across all operations. In what follows we show that if we treat a bottleneck residual block as a single operation (and treat inner convolution
as a disposable tensor), the total amount of memory would be dominated by the size of bottleneck tensors, rather than the size of tensors that are internal to bottleneck (and much larger). 

\paragraph{Bottleneck Residual Block}
\def\F{{\cal F}}
\def\N{{\cal N}}
A bottleneck block operator $\F(x)$ shown in Figure~\ref{fig:bottleneck-residual-block} can be expressed as a composition of three
operators $\F(x) = [A \circ \N \circ  B] x$, where $A$ is a linear
transformation $A:\real^{s \times s \times k} \rightarrow \real^{s \times s \times n}$,  $\N$ is a non-linear per-channel transformation: $\N: \real^{s \times s \times n} \rightarrow \real^{s' \times s' \times n}$, and $B$ is again a linear transformation to the output domain: $B: \real^{s' \times s' \times n} \rightarrow \real^{s' \times s' \times k'}$.

For our networks $\N =
\relus \circ \depthwise \circ \relus$, but the results apply to any per-channel transformation. Suppose the size of the input domain is $|x|$ and the size of the output domain is $|y|$, then the memory required to compute $F(X)$ can be as low as $|s^2 k| + |s'^2 k'| + O(\max(s^2, s'^2))$.

The algorithm is based on the fact that the inner tensor $\cal I$ can be represented as concatenation of $t$ tensors, of size $n/t$ each 
and our function can then be represented as 
$$\F(x) = \sum_{i=1}^t (A_i \circ N \circ B_i)(x)$$ 
by  accumulating the sum, we only require one intermediate block of size $n/t$ to be kept in memory at all times. Using $n=t$ we end up having to keep only a single channel of the intermediate representation at all times. The two constraints that enabled us to use this trick is (a) the fact that the inner transformation (which includes non-linearity and depthwise) is per-channel, and (b) the consecutive non-per-channel operators have significant ratio of the input size to the output. For most of the traditional neural networks, such trick would not produce a significant improvement. 

We note that, the  number of multiply-adds operators needed to compute $F(X)$ using $t$-way split is independent of $t$, however in existing implementations we find that replacing one matrix multiplication with several smaller ones hurts runtime performance due to increased cache misses. We find that this approach is the most helpful to be used with $t$ being a small constant between $2$ and $5$. It significantly reduces the memory requirement, but still allows one to utilize most of the efficiencies gained by using highly optimized matrix multiplication and convolution operators provided by deep learning frameworks. It remains to be seen if special framework level optimization may lead to further runtime improvements. 

\newcommand\omt[1]{}

\section{Experiments}
\label{sec:experiments}

\begin{figure}
    \centering
    \includegraphics[width=.99\linewidth,keepaspectratio=true]{./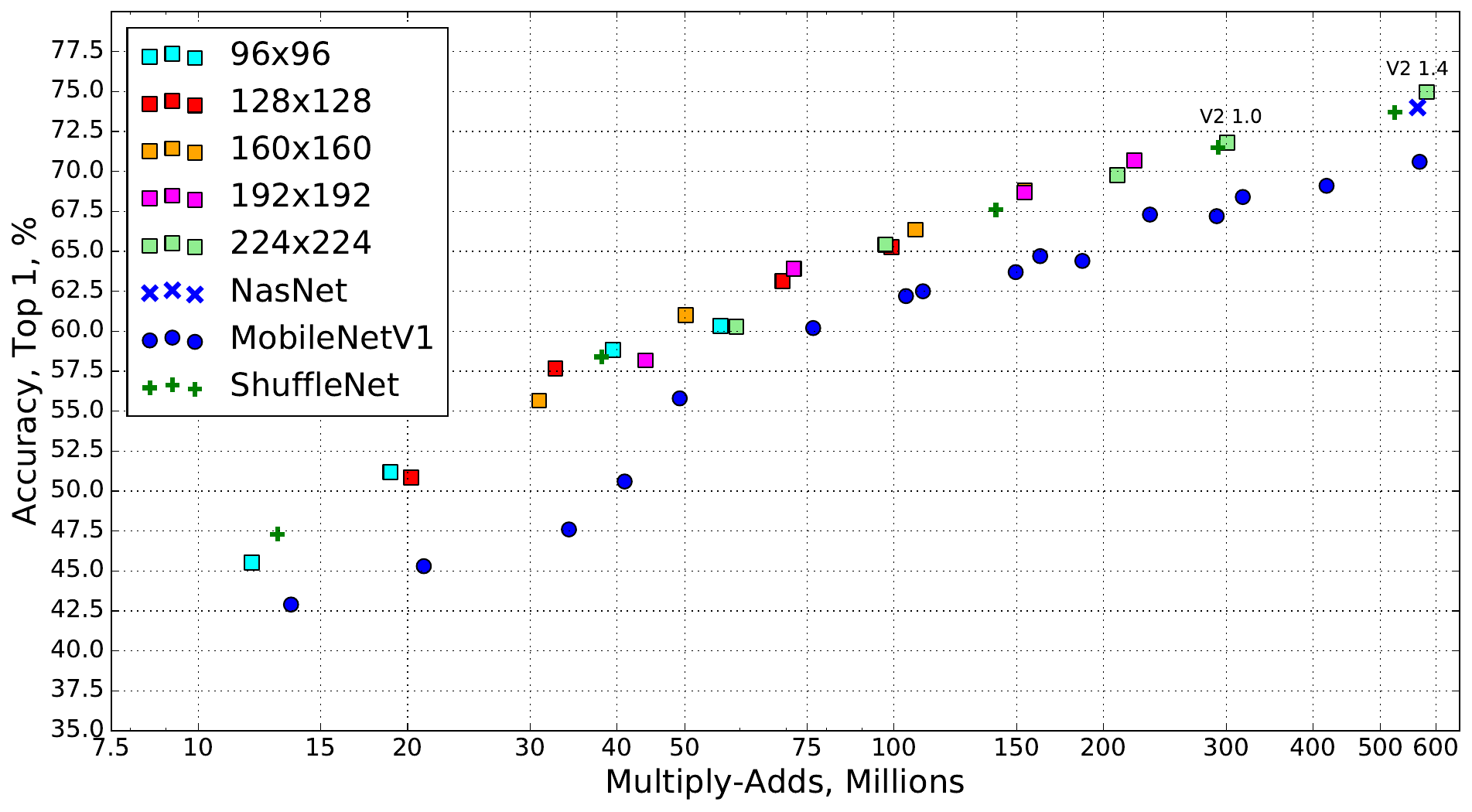}
    \caption{
        Performance curve of \mbox{MobileNetV2} vs \mbox{MobileNetV1}, \mbox{ShuffleNet}, \mbox{NAS}.
        For our networks we use multipliers $0.35$, $0.5$, $0.75$, $1.0$ for all resolutions, and additional $1.4$ for for $224$. \beveco
    }
    \label{fig:performance_curve}
\end{figure}

\begin{figure}
    \centering
    \begin{subfigure}{0.23\textwidth}
        \centering
        \includegraphics[width=.98\linewidth,keepaspectratio=true]{./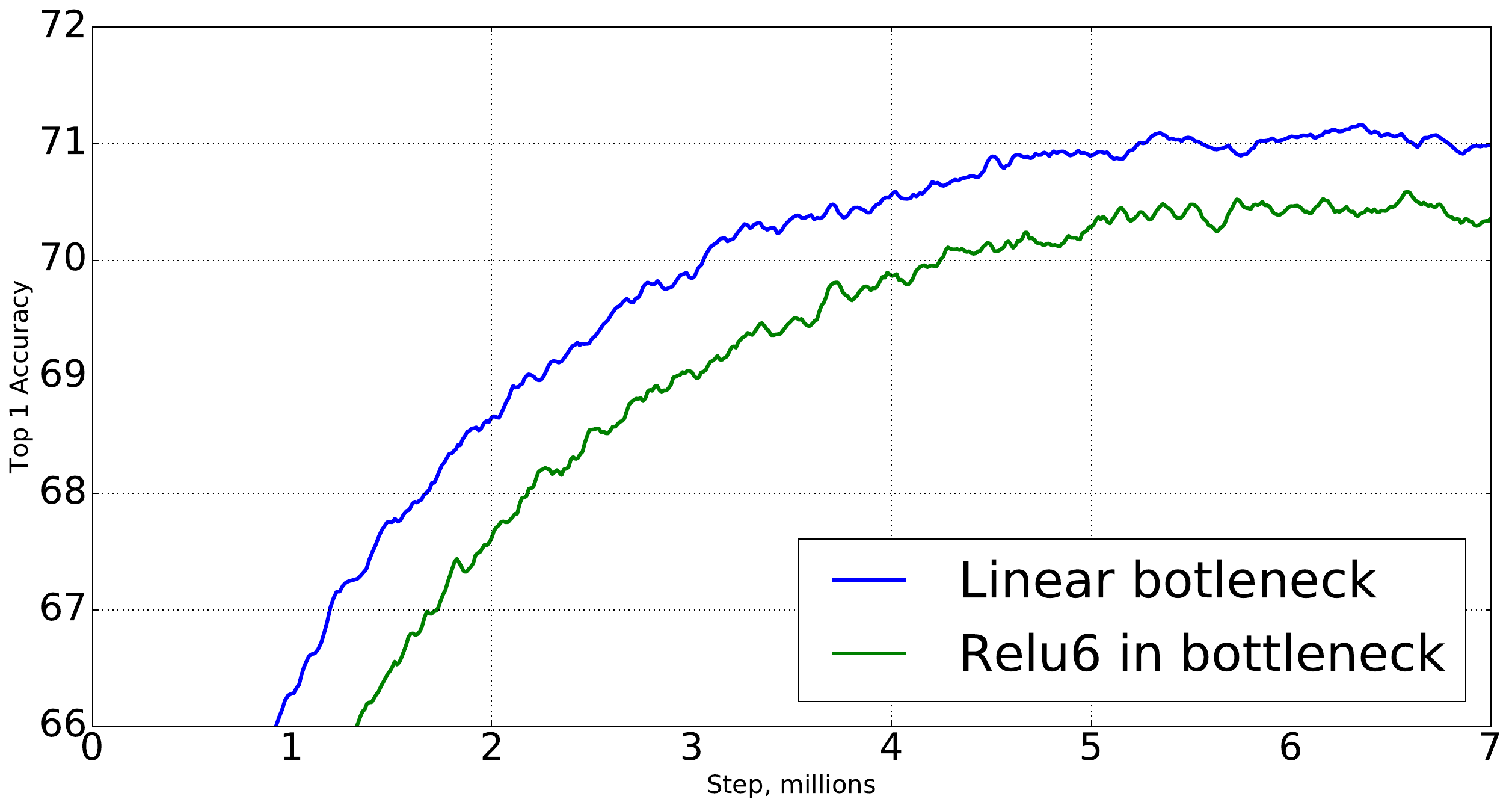}
        \caption{Impact of non-linearity in the bottleneck layer.}
        \label{fig:linearity-impact}
    \end{subfigure}
    \begin{subfigure}{0.23\textwidth}
        \includegraphics[width=.98\linewidth,keepaspectratio=true]{./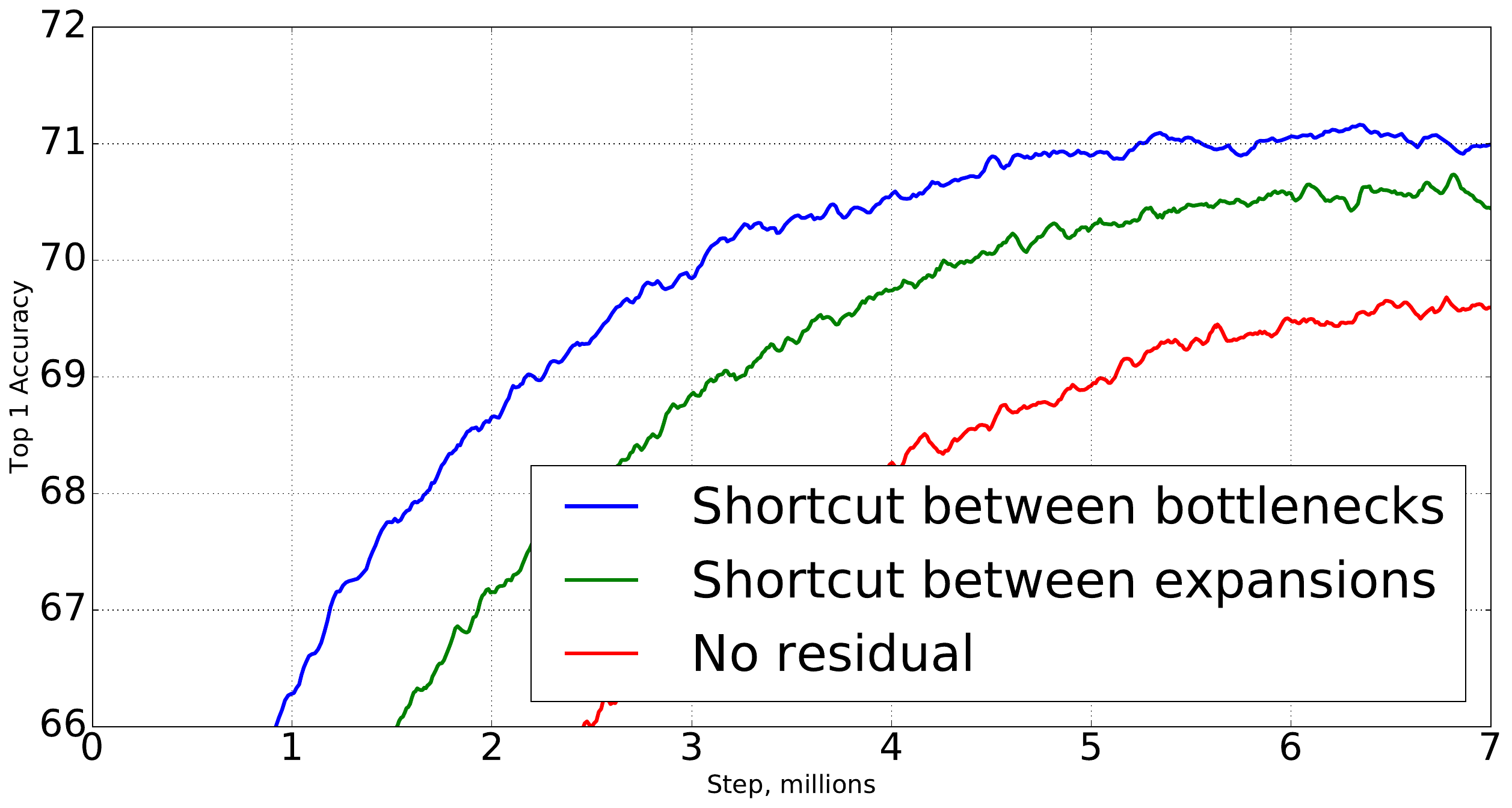}
        \caption{Impact of variations in residual blocks.}
        \label{fig:residual-impact}
    \end{subfigure}
    \caption{
        The impact of non-linearities and various types of shortcut (residual) connections.
    }
\end{figure}

\omt{
\begin{figure}
    \centering
    \begin{subfigure}[t]{0.23\textwidth}
        \includegraphics[width=.98\linewidth,keepaspectratio=true]{./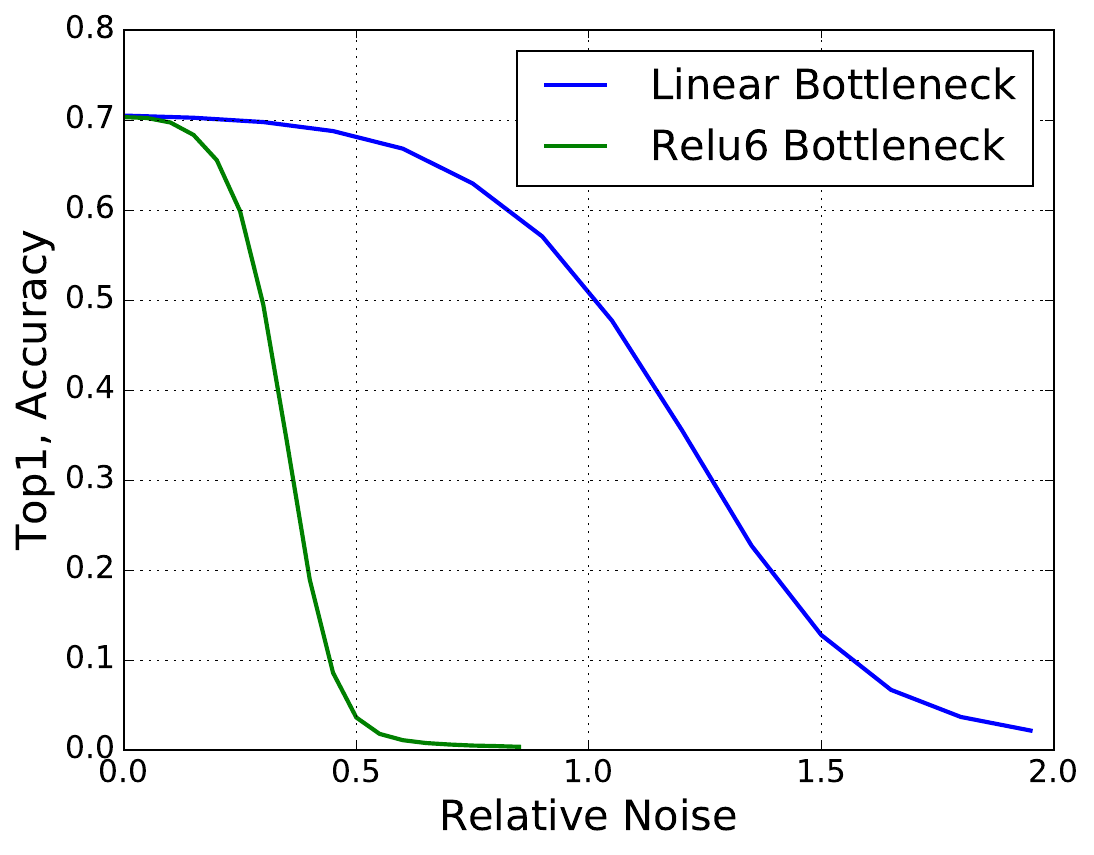}
        \caption{Accuracy vs. noise.}
        \label{fig:robustness_to_noise}
    \end{subfigure}
    \begin{subfigure}[t]{0.23\textwidth}
        \includegraphics[width=.98\linewidth,keepaspectratio=true]{./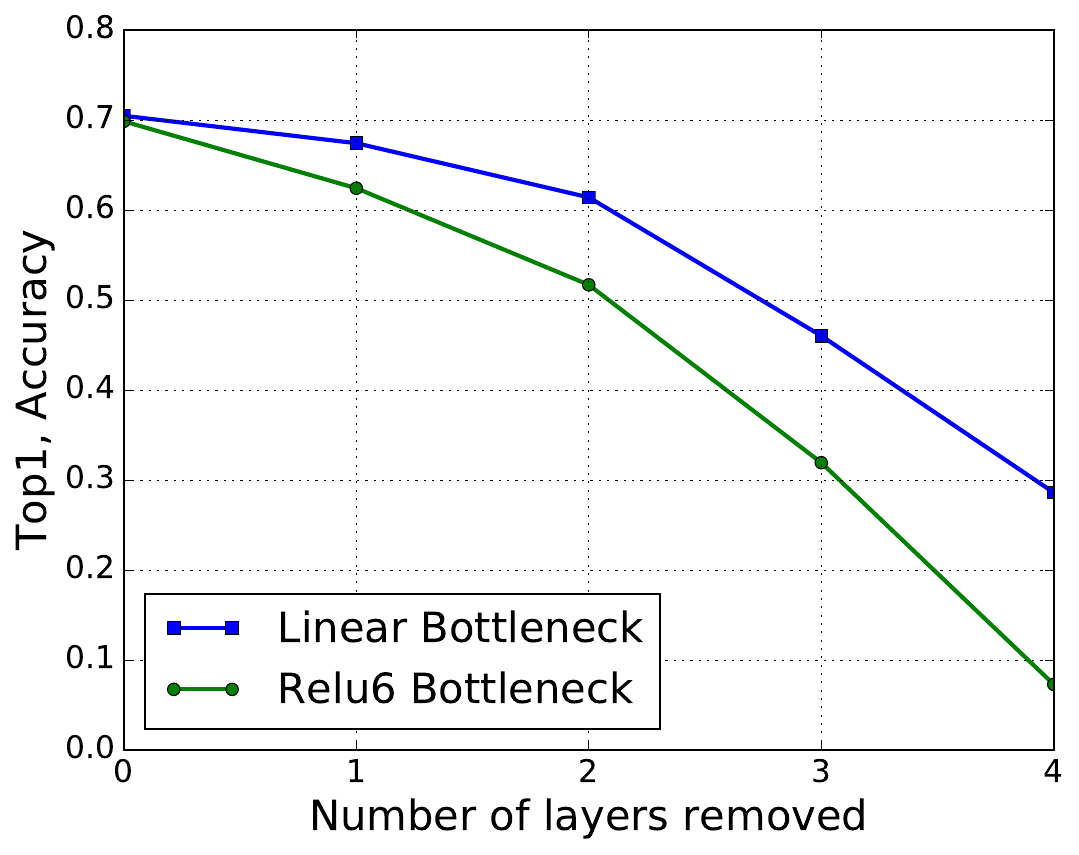}
        \caption{Accuracy vs.\# layers removed.}
        \label{fig:robustness_to_layer_removal}
    \end{subfigure}
    \caption{
        Robustness to noise.
        In Figure~\ref{fig:robustness_to_noise} we show how accuracy deteriorates as we add Gaussian noise to one layer.
        On $x$ axis we plot the relative amplitude of the noise vector vs. the signal vector.
        Note how linear bottleneck layers have much higher robustness.
        Figure~\ref{fig:robustness_to_layer_removal} shows that linear bottleneck are more robust to layer removal.
    }
\end{figure}
}

\subsection{ImageNet Classification}
\paragraph{Training setup}
We train our models using TensorFlow\cite{tensorflow2015-whitepaper}.
We use the standard RMSPropOptimizer with both decay and momentum set to $0.9$.
We use batch normalization after every layer, and the standard weight decay is set to $0.00004$.
Following \mbox{MobileNetV1}\cite{MobilenetV1} setup we use initial learning rate of $0.045$, and learning rate decay rate of $0.98$ per epoch.
We use 16 GPU asynchronous workers, and a batch size of $96$.

\paragraph{Results}
We compare our networks against \mbox{MobileNetV1}, \mbox{ShuffleNet} and \mbox{NASNet-A} models.
The statistics of a few selected models is shown in Table~\ref{table:performance-comparison} with the full performance graph shown in  Figure~\ref{fig:performance_curve}.
 
\begin{table}[b]
    \centering 
    \resizebox{\columnwidth}{!}{
        \begin{tabular}{l|c|cc|c}
        \toprule [0.2em]
        Network & Top 1 & Params & MAdds & CPU  \\
        \toprule [0.2em]
        MobileNetV1       &70.6 &  4.2M& 575M & 113ms  \\
        ShuffleNet (1.5)   & 71.5& {\bf 3.4M} & 292M & - \\
        ShuffleNet (x2)    &73.7 & 5.4M & 524M & - \\
        NasNet-A           & {74.0} & 5.3M& 564M & 183ms\\
        \toprule[0.1em]
        MobileNetV2       &\bf 72.0    & \bf 3.4M   & \bf 300M & {\bf 75ms}\\
        MobileNetV2 (1.4) & {\bf 74.7} & 6.9M& 585M & {\bf 143ms} \\
        \toprule [0.2em]
        \end{tabular}
    }
    \caption{
        Performance on \mbox{ImageNet}, comparison for different networks.
        As is common practice for ops, we count the total number of Multiply-Adds.
        In the last column we report running time in milliseconds (ms) for a single large core of the Google \mbox{Pixel 1} phone (using \mbox{TF-Lite}).
        We do not report \mbox{ShuffleNet} numbers as  efficient group convolutions and shuffling are not yet supported.
    }
    \label{table:performance-comparison}
\end{table}

\subsection{Object Detection}
We evaluate and compare the performance of \mbox{MobileNetV2} and \mbox{MobileNetV1} as feature extractors \cite{huang2016speed} for object detection with a modified version of the Single Shot Detector (SSD)~\cite{liu2016ssd} on \mbox{COCO} dataset \cite{COCO}.
We also compare to \mbox{YOLOv2}~\cite{redmon2016yolo9000} and original SSD (with VGG-16~\cite{VGGNet} as base network) as baselines.
We do not compare performance with other architectures such as \mbox{Faster-RCNN} \cite{ren2015faster} and RFCN \cite{dai2016rfcn} since our focus is on mobile/real-time models.

\textbf{SSDLite}: In this paper, we introduce a mobile friendly variant of regular SSD. We replace all the regular convolutions with separable convolutions (depthwise followed by $1\times1$ projection) in SSD prediction layers.
This design is in line with the overall design of \mbox{MobileNets} and is seen to be much more computationally efficient.
We call this modified version \mbox{SSDLite}.
Compared to regular SSD, \mbox{SSDLite} dramatically reduces both parameter count and computational cost as shown in Table~\ref{tab:ssd_lite}.

\begin{table}[!t]
  \centering
  \begin{tabular}{c | c c c}
  \toprule[0.2em]
   & Params & MAdds \\
  \toprule[0.2em]
  SSD\cite{liu2016ssd} & 14.8M &  1.25B  \\
  SSDLite & \textbf{2.1M} & \textbf{0.35B}  \\
  \toprule[0.2em]
  \end{tabular}
  \caption{
    Comparison of the size and the computational cost between SSD and SSDLite configured with \mbox{MobileNetV2} and making predictions for $80$ classes.}
  \label{tab:ssd_lite}
\end{table}

For \mbox{MobileNetV1}, we follow the setup in \cite{huang2016speed}.
For \mbox{MobileNetV2}, the first layer of SSDLite is attached to the  expansion of layer 15 (with output stride of 16). The second and the rest of SSDLite layers are attached on top of the last layer (with output stride of $32$). This setup is consistent with \mbox{MobileNetV1} as all  layers are attached to the  feature map of the same output strides.

Both \mbox{MobileNet} models are trained and evaluated with Open Source TensorFlow Object Detection API~\cite{detection-api}.
The input resolution of both models is $320\times 320$.
We benchmark and compare both mAP (COCO challenge metrics), number of parameters and number of Multiply-Adds.
The results are shown in Table~\ref{tab:mobilenet_ssd}.
\mbox{MobileNetV2} SSDLite is not only the most efficient model, but also the most accurate of the three.
Notably, \mbox{MobileNetV2} SSDLite is \textbf{$20\times$} more efficient and \textbf{$10\times$} smaller while still outperforms YOLOv2 on COCO dataset.

\begin{table}[!t]
  \centering
  \scalebox{0.85}{
      \begin{tabular}{c|c|cc|c}
          \toprule[0.2em]
          Network                            & mAP  &  Params   & MAdd & CPU\\
          \toprule[0.2em]
          SSD300\cite{liu2016ssd}          & 23.2 & 36.1M & 35.2B  & -  \\
          SSD512\cite{liu2016ssd}          & 26.8 & 36.1M & 99.5B  & -  \\
          YOLOv2\cite{redmon2016yolo9000}  & 21.6 & 50.7M & 17.5B  & -  \\
          MNet V1 + SSDLite           & 22.2 & 5.1M  & 1.3B    & 270ms\\
          \toprule[0.1em]
          MNet V2 + SSDLite           & 22.1 & \textbf{4.3M} & \textbf{0.8B} & 200ms  \\
          \toprule[0.2em]
      \end{tabular}
  }
  \caption{
      Performance comparison of \mbox{MobileNetV2} + SSDLite and other realtime detectors on the COCO dataset object detection task.
      \mbox{MobileNetV2} + SSDLite achieves competitive accuracy with significantly fewer parameters and smaller computational complexity.
      All models are trained on \mbox{\tt trainval35k} and evaluated on \mbox{\tt test-dev}.
      \mbox{SSD/YOLOv2} numbers are from \cite{redmon2016yolo9000}.
      The running time is reported for the large core of the Google \mbox{Pixel 1} phone, using an internal version of the \mbox{TF-Lite} engine.
  }
  \label{tab:mobilenet_ssd}
\end{table}

\subsection{Semantic Segmentation}
In this section, we compare \mbox{MobileNetV1} and \mbox{MobileNetV2} models used as feature extractors with \mbox{DeepLabv3}~\cite{DeepLabV3} for the task of mobile semantic segmentation.
\mbox{DeepLabv3} adopts atrous convolution~\cite{Holschneider1989real, Sermanet2013Overfeat, Papandreou2014untangling}, a powerful tool to explicitly control the resolution of computed feature maps, and builds five parallel heads including (a) Atrous Spatial Pyramid Pooling module (ASPP) \cite{DeepLabV2} containing three $3\times 3$ convolutions with different atrous rates, (b) $1\times 1$ convolution head, and (c) Image-level features \cite{ParseNet}.
We denote by $\emph{output\_stride}$ the ratio of input image spatial resolution to final output resolution, which is controlled by applying the atrous convolution properly.
For semantic segmentation, we usually employ $\emph{output\_stride}=16$ or $8$ for denser feature maps.
We conduct the experiments on the \mbox{PASCAL} VOC 2012 dataset \cite{PASCAL}, with extra annotated images from \cite{Hariharan2011semantic} and evaluation metric mIOU.

To build a mobile model, we experimented with three design variations: (1) different feature extractors, (2) simplifying the \mbox{DeepLabv3} heads for faster computation, and (3) different inference strategies for boosting the performance.
Our results are summarized in Table~\ref{tab:mobilenet_val}.
We have observed that: (a) the inference strategies, including multi-scale inputs and adding left-right flipped images, significantly increase the MAdds and thus are not suitable for on-device applications, (b) using $\emph{output\_stride}=16$ is more efficient than $\emph{output\_stride}=8$, (c) \mbox{MobileNetV1} is already a powerful feature extractor and only requires about $4.9 - 5.7$ times fewer MAdds than ResNet-101 \cite{ResNet} (\eg, mIOU: 78.56 $\emph{vs}$ 82.70, and MAdds: 941.9B $\emph{vs}$ 4870.6B), (d) it is more efficient to build \mbox{DeepLabv3} heads on top of the second last feature map of \mbox{MobileNetV2} than on the original last-layer feature map, since the second to last feature map contains $320$ channels instead of $1280$, and by doing so, we attain similar performance, but require about $2.5$ times fewer operations than the \mbox{MobileNetV1} counterparts, and (e) \mbox{DeepLabv3} heads are computationally expensive and removing the ASPP module significantly reduces the MAdds with only a slight performance degradation.
In the end of the Table~\ref{tab:mobilenet_val}, we identify a potential candidate for on-device applications (in bold face), which attains $75.32\%$ mIOU and only requires $2.75$B MAdds.

\begin{table}[!t]
  \centering
  \scalebox{0.81}{
  \begin{tabular}{c |c c c | c c c}
    \toprule[0.2em]
    Network &  OS  & ASPP       & MF        & mIOU & Params & MAdds\\
    \toprule[0.2em]
    MNet V1 &  16  & \checkmark &            & 75.29 & 11.15M &  14.25B\\  
            &   8  & \checkmark & \checkmark & 78.56 & 11.15M & 941.9B \\
    \toprule[0.1em]
    MNet V2* & 16  & \checkmark &            & 75.70 & 4.52M & 5.8B \\
             & 8   & \checkmark & \checkmark & 78.42 & 4.52M & 387B \\
    \toprule[0.1em]
    MNet V2* & 16  &            &            & {\bf 75.32} & {\bf 2.11M} & {\bf 2.75B} \\
             &  8  &            & \checkmark & 77.33 & 2.11M & 152.6B\\
    \toprule[0.1em]
    \toprule[0.1em]
    ResNet-101 & 16 & \checkmark &            & 80.49 & 58.16M & 81.0B \\
               &  8 & \checkmark & \checkmark & 82.70 & 58.16M & 4870.6B \\
    \bottomrule[0.1em]
  \end{tabular}
  }
  \caption{
    \mbox{MobileNet} + \mbox{DeepLabv3} inference strategy on the \mbox{PASCAL} VOC 2012 \textit{validation} set.
    {\bf MNet V2*}: Second last feature map is used for \mbox{DeepLabv3} heads, which includes (1) Atrous Spatial Pyramid Pooling ({\bf ASPP}) module, and (2) $1\times1$ convolution as well as image-pooling feature.
    {\bf OS}: \emph{output\_stride} that controls the output resolution of the segmentation map.
    {\bf MF}: Multi-scale and left-right flipped inputs during test. All of the models have been pretrained on COCO. The potential candidate for on-device applications is shown in bold face. \mbox{PASCAL} images have dimension $512\times 512$ and atrous convolution allows us to control output feature resolution without increasing the number of parameters.}
  \label{tab:mobilenet_val}
\end{table}

\omt{
\begin{table*}[!t]
  \centering
  \scalebox{1}{
  \begin{tabular}{c c |c c c c | c c}
    \toprule[0.2em]
    Feature Extractor & DeepLabv3 Heads &  OS    & MS         & Flip        & COCO & mIOU & FLOPS (B)\\
    \toprule[0.2em]
    MobileNetV1 & ASPP(6, 12, 18)         & 16            &            &             &      & 70.10 & 28.57\\
                & $1\times1$ conv         & 16            &            &             &  \checkmark    & 75.29 & 28.57\\
                & Image Pooling           & 8             &            &             &      & 70.47 & 106.57\\
                &                         & 8 & \checkmark &             &      & 72.79 & 991.93\\
                &                         & 8 & \checkmark & \checkmark  &      & 72.97 & 1983.80\\
                &                         & 8 &  \checkmark & \checkmark  & \checkmark & 78.56 & 1983.80\\
    \toprule[0.1em]
    MobileNetV2 & ASPP(6, 12, 18) &  16       &            &             &              & 70.26 & 30.39\\
                & $1\times1$ conv &  16       &            &             & \checkmark   & 76.26 & 30.39\\
                & Image Pooling   &   8 &            &             &      & 70.56 & 117.01\\
                &                 &   8 & \checkmark &             &      & 72.55 & 1099.47\\
                &                 &   8 & \checkmark & \checkmark  &      & 72.66 & 2198.87\\
                &                 &   8 & \checkmark & \checkmark  & \checkmark & 79.10 & 2198.87\\
    \toprule[0.1em]
    MobileNetV2* & ASPP(6, 12, 18) & 16    &            &             &                & 70.32 & 11.67 \\
                 & $1\times1$ conv & 16    &            &             & \checkmark     & 75.70 & 11.67 \\ 
                 & Image Pooling   &       8 &            &             &      & 70.51 & 42.30 \\
                 &                 &       8 & \checkmark &             &      & 72.10 & 387.56 \\
                 &                 &       8 & \checkmark & \checkmark  &      & 72.15 & 775.09 \\
                 &                 &       8 & \checkmark & \checkmark  & \checkmark & 78.42 & 775.09 \\
    \toprule[0.1em]
    MobileNetV2* & $1\times1$ conv   & 32        &            &             &      & 68.76 & 3.23 \\
                 & Image Pooling     & 32        &            &             & \checkmark     & 71.16 & 3.23 \\
                 &                   & 16        &            &             &      & 69.94 & 5.50 \\
                 &                   & 16        &            &             & \checkmark     & {\bf 75.32} & {\bf 5.50} \\
                 &                   &  8 &            &             &      & 70.07 & 17.70 \\
                 &                   &  8 & \checkmark &             &      & 71.40 & 152.60 \\
                 &                   &  8 & \checkmark & \checkmark  &      & 71.54 & 305.18 \\
                 &                   &  8 & \checkmark & \checkmark  & \checkmark & 77.33 & 305.18\\
                 &                   & 16             & 0.5        &             &      & 56.45 & 1.46 \\
                 &                   & 16            & 0.5        &             & \checkmark     & 61.42 & 1.46 \\
                 &                   & 16             & 0.75        &             &      & 67.29 & 3.15 \\
                 &                   & 16            & 0.75        &             & \checkmark     & 71.96 & 3.15 \\
    \toprule[0.1em]
    MobileNetV2* & $1\times1$ conv  & 32      &            &             &      & 62.33 & 3.11 \\
                 &                  & 16      &            &             &      & 62.96 & 5.04 \\
    \toprule[0.1em]
    \toprule[0.1em]
    ResNet-v1-101 & ASPP(6, 12, 18) &  16        &            &             &      & 77.21 & 162.04 \\
                  & $1\times1$ conv &  16        &            &             &  \checkmark    & 80.49 & 162.04 \\
                  & Image Pooling   &  8        &            &              &      & 78.51 & 552.35 \\
                  &                 &  8 & \checkmark &  &  & 79.45 & 4870.74 \\
                  &                 &  8 & \checkmark & \checkmark &  & 79.77 & 9741.18 \\
                  &                 &  8 & \checkmark & \checkmark & \checkmark & 82.70 & 9741.18 \\
    \bottomrule[0.1em]
  \end{tabular}
}
  \caption{\small \mbox{MobileNet} + DeepLabv3 inference strategy on the \mbox{PASCAL} VOC 2012 \textit{validation} set.
  \mbox{\bf MobileNetV2*}: Second last feature map is used for \mbox{DeepLabv3} heads, which includes (1) ASPP with three different rates, (2) $1\times 1$ convolution, and (3) image-level feature.
  {\bf ASPP}: Atrous spatial pyramid pooling.
  {\bf OS}: \emph{output\_stride}.
  {\bf MS}: Multi-scale inputs during test (if specified by a number, the input is resized by that value).
  {\bf Flip}: Adding left-right flipped inputs.
  {\bf COCO}: Model pretrained on MS-COCO. The potential candidate for on-device applications is shown in bold face.
    Note that \mbox{PASCAL} images have dimension around $512\times 512$.}
  \label{tab:mobilenet_val}
\end{table*}}

\subsection{Ablation study}
{\noindent \bf Inverted residual connections.}
The importance of residual connection has been studied extensively~\cite{ResNet,ResNext2016,InceptionV4}.
The new result reported in this paper is that the shortcut connecting bottleneck perform better than shortcuts connecting the expanded layers (see Figure~\ref{fig:residual-impact} for comparison).

{\noindent \bf Importance of linear bottlenecks.} The linear
bottleneck models are strictly less powerful than models with
non-linearities, because the activations can always operate in linear
regime with appropriate changes to biases and scaling. However 
our experiments shown in Figure~\ref{fig:linearity-impact} indicate that 
linear bottlenecks improve performance, providing support that non-linearity destroys information in low-dimensional space. 
\omt{
{\em Impact of expansion rate}
Figure~\ref{fig:effect_of_expansion_rate} shows the impact of different expansion rates at different points of expansion curve. Interestingly, the the expansion factor within a certain range of 5 to 8 seems to mostly match the performance curve, with minor variations which suggests that there might be a trade-off in the model expressivity described by (roughly) number of multiply adds. 
}

\omt{
\subsection{Layer removal experiments and robustness to noise}
Similarly to \cite{ResNetShallow}, we observe that it is possible to remove a trained layer from a network with only a small degradation of the network  performance.
We stress here that this removal does not include a consequent retraining of the network, rather it is a mechanical transformation where we remove 1 or more layers and re-wire the connections.
In our experiments, we removed layers 7 through 12 and made a surprising finding that even though the transformation operation goes through a high-dimensional space\footnote{This is in contrast with traditional residual blocks, where the transformation is low-dimensional.}, the resulting layer operation has very gradual impact on performance.
Interestingly, while both linear and non-linear bottleneck layer exhibit this resilience, linearity seem to \oldtext{significantly improve it}.
In our experiments, this property is only observed for residual networks, but interestingly enough the non-linear bottleneck version of the network seems to  exhibit degraded performance. 
}

\label{sec:layer-removal-experiments}

\section{Conclusions and future work}
We described a very simple network architecture that allowed us to build a family of highly efficient mobile models. Our basic building unit, has several properties that make it particularly suitable for mobile applications. It allows very memory-efficient inference and relies utilize standard operations present in all neural frameworks.

For the \mbox{ImageNet} dataset, our architecture improves the state of the art for wide range of performance points. 

For object detection task, our network outperforms state-of-art realtime detectors on COCO dataset both in terms of accuracy and model complexity. Notably, our architecture combined with the \mbox{SSDLite} detection module is $20\times$ less computation and $10\times$ less parameters than \mbox{YOLOv2}.

On the theoretical side: the proposed convolutional block has a unique property that allows to separate  the network expressiveness (encoded by expansion layers) from its capacity (encoded by bottleneck inputs). Exploring this is an important direction for future research.

\paragraph{Acknowledgments} We would like to thank Matt Streeter and Sergey Ioffe for their helpful feedback and discussion.

\bibliographystyle{unsrt}
\bibliography{bibliography}

\appendix
\def\vecs{{s_1,\dots,s_m}}
\def\interior{\operatorname{interior}}

\section{Bottleneck transformation}
\label{sec:appendix_bottle}

In this section we study the properties of an operator $A\relu (B x)$, where $x\in \real^n$ represents an $n$-channel pixel, $B$ is an $m \times n$ matrix and $A$ is an $n\times m$ matrix.
We argue that if $m \le n$, transformations of this form can only exploit non-linearity at the cost of losing information.
In contrast, if $n \ll m$, such transforms can be highly non-linear but still invertible with high probability (for the initial random weights).

First we show that $\relu$ is an identity transformation for any point that lies in the interior of its image.
\begin{lemma}
Let $S(X) = \{\relu (x) | x \in X\} $. If a volume of $S(X)$ is non-zero, then  $\interior S(X) \subseteq X$.
\end{lemma}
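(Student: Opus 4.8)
The plan is to use the two defining features of $\relu$: it is applied coordinatewise, $\relu(x)_i = \max(x_i,0)$, and on each coordinate it acts as the identity wherever the output is positive. First I would record the elementary fact that $S(X) = \relu(X)$ always lies in the closed nonnegative orthant $\real^n_{\ge 0}$, so by monotonicity of the interior operator, $\interior S(X) \subseteq \interior \real^n_{\ge 0} = \real^n_{>0}$, the open orthant of strictly positive coordinates.

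Next, for an arbitrary $y \in \interior S(X)$ I would pick a preimage: since $y \in S(X)$, there is $x \in X$ with $\relu(x) = y$. Because $y_i > 0$ for every $i$ by the previous step, the coordinatewise identity $\max(x_i,0) = y_i$ forces $x_i = y_i$ for each $i$, hence $x = y$; therefore $y = x \in X$. Since $y \in \interior S(X)$ was arbitrary, $\interior S(X) \subseteq X$. I would remark that the non-zero-volume hypothesis is not really used in the argument — the inclusion $\interior S(X) \subseteq X$ is unconditional and merely vacuous when $\interior S(X) = \emptyset$ — it only flags the regime of interest, in which $S(X)$ (and hence, by the lemma, $X$ itself) is full-dimensional.

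There is essentially no computation here, so the only step deserving care — the closest thing to an obstacle — is the topological claim $\interior S(X) \subseteq \real^n_{>0}$: one must argue that a point with some coordinate equal to $0$ lies on the boundary of $\real^n_{\ge 0}$ and therefore cannot be interior to the subset $S(X)$, because every neighbourhood of it contains points with a negative coordinate, which are not in $S(X)$. I would also make explicit at the outset that ``volume'' means Lebesgue measure on $\real^n$ and ``interior'' the topological interior in $\real^n$, so that statement and proof refer to the same ambient dimension $n$ throughout.
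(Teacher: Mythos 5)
Your proof is correct and follows essentially the same route as the paper's: interior points of $S(X)$ must have all strictly positive coordinates (since $S(X)$ lies in the nonnegative orthant and points with a zero coordinate cannot be interior), whence $\relu$ acts as the identity on any preimage and the point itself lies in $X$. Your write-up is in fact slightly more careful than the paper's, in that you explicitly pass through a preimage $x\in X$ and deduce $x=y$, and you correctly note that the non-zero-volume hypothesis is not needed for the inclusion itself.
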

\begin{proof}
Let $S' = \interior\relu (S)$. First we note that if $x \in S'$, then $x_i > 0$ for
all $i$. Indeed, image of $\relu$ does not contain points with negative coordinates, and points with zero-valued coordinates can not be interior points. Therefore for each $x \in S'$, $x = \relu (x)$
as desired.
\end{proof}

It follows that for an arbitrary composition of interleaved linear transformation and $\relu$ operators, if it preserves non-zero volume, that part of the input space $X$ that is preserved over such a composition is a linear transformation, and thus is likely to be a minor contributor to the power of deep networks.
However, this is a fairly weak statement. Indeed, if the input manifold can be embedded into $(n-1)$-dimensional manifold (out of $n$ dimensions total), the lemma is trivially true, since the starting volume is $0$. In what follows we show that when the dimensionality of input manifold is significantly lower we can ensure that there will be no information loss.

Since the $\relu(x)$ nonlinearity is a surjective function mapping the entire ray $x\le 0$ to $0$, using this nonlinearity in a neural network can result in information loss.
Once $\relu$ collapses a subset of the input manifold to a smaller-dimensional output, the following network layers can no longer distinguish between collapsed input samples.
In the following, we show that bottlenecks with sufficiently large expansion layers are resistant to information loss caused by the presence of $\relu$ activation functions.
\begin{figure*}[t]
  \centering

  \begin{subfigure}{.47\textwidth}
    \centering
    \includegraphics[width=.95\linewidth,keepaspectratio=true]
    {./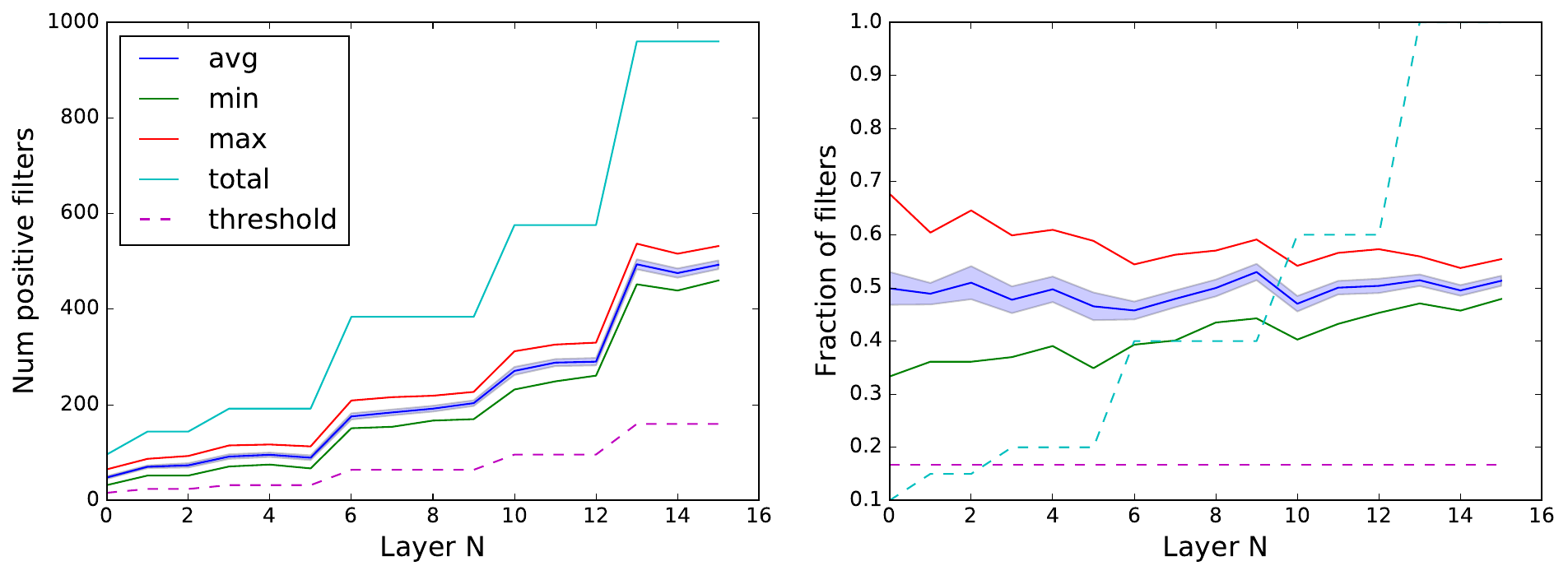}
    \caption {At step 0}
    \label{fig:activation_patterns_random}
  \end{subfigure}
  \begin{subfigure}{.47\textwidth}
    \includegraphics[width=.88\linewidth,keepaspectratio=true]
    {./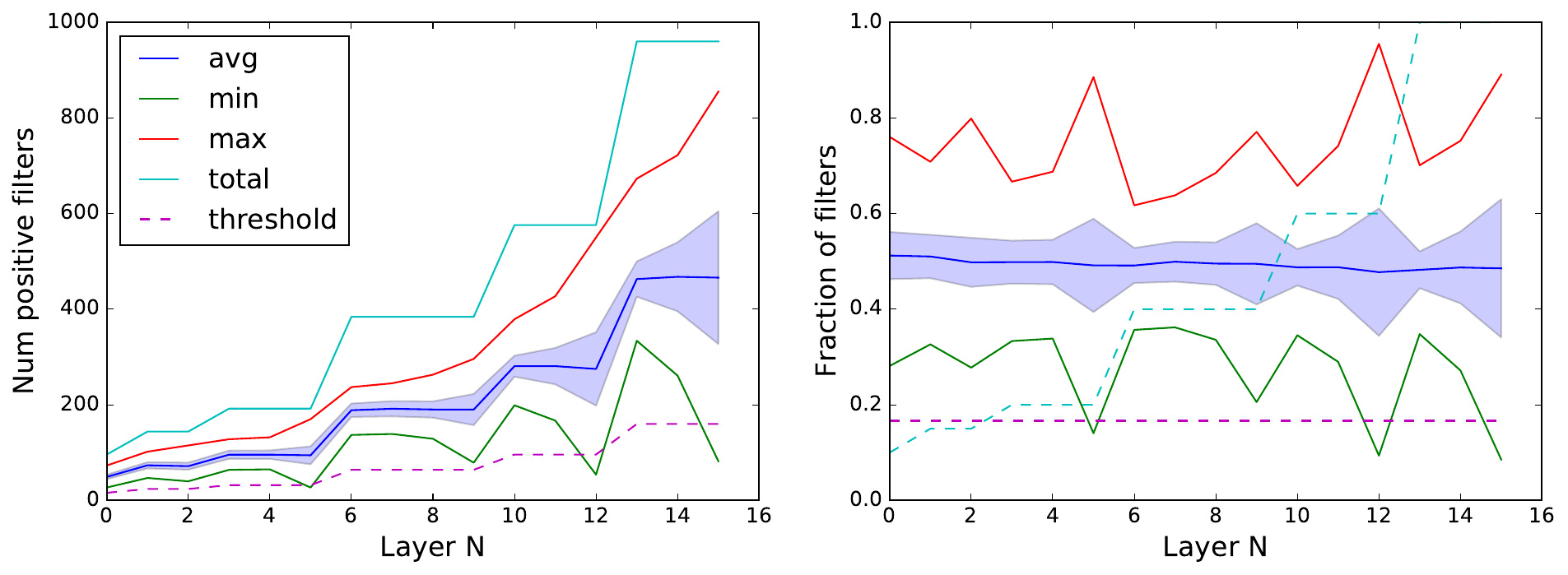}
    \caption {Fully trained}
    \label{fig:activation_patterns_trained}
  \end{subfigure}
  \caption{Distribution of activation patterns. The $x$-axis is the layer index, and we show minimum/maximum/average number of  positive channels after each convolution with $\relu$.
  $y$-axis is either absolute or relative number of channels. The ``threshold'' line indicates the $\relu$
  invertibility threshold - that is the number of positive dimensions is higher than the input space. In our case this is $1/6$ fraction of the channels. Note how at the
  beginning of the training on Figure~\ref{fig:activation_patterns_random} the distribution is much more tightly
  concentrated around the mean. After the training has finished (Figure~\ref{fig:activation_patterns_trained}), the average hasn't changed but the standard deviation grew dramatically. \beveco }
  \label{fig:activation_patterns}
\end{figure*}

\def\E{\mathbb{E}}
\begin{lemma}[Invertibility of ReLU]
\label{thm:invertibility-of-relu}
Consider an operator $\relu (B x)$, where $B$ is an $m \times n$ matrix and $x \in \real^n$. Let $y_0 = \relu (B x_0)$ for some $x_0\in \real^n$, then equation $y_0=\relu (B x)$ has a unique solution with respect to $x$ if and only if $y_0$ has at least $n$ non-zero values and there are $n$ linearly independent rows of $B$ that correspond to non-zero coordinates of $y_0$.
\end{lemma}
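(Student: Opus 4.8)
The plan is to characterize uniqueness of solutions to $y_0 = \relu(Bx)$ by splitting coordinates of $y_0$ according to whether they are positive or zero. Write $P = \{i : (y_0)_i > 0\}$ and $Z = \{i : (y_0)_i = 0\}$, and let $B_P$, $B_Z$ denote the corresponding row submatrices of $B$. Any $x$ satisfying $y_0 = \relu(Bx)$ must satisfy the \emph{linear} equations $B_P x = (y_0)_P$ (since $\relu$ acts as the identity on positive coordinates) together with the \emph{inequalities} $B_Z x \le 0$ (those rows must be mapped to zero by $\relu$). Conversely, any $x$ satisfying both $B_P x = (y_0)_P$ and $B_Z x \le 0$ is a valid solution. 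So the solution set is the intersection of an affine subspace (the solution set of $B_P x = (y_0)_P$, which is nonempty because $x_0$ lies in it) with the polyhedral cone-shifted region $\{B_Z x \le 0\}$.

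Next I would argue both directions. For the ``if'' direction: suppose $y_0$ has at least $n$ nonzero entries and $B_P$ contains $n$ linearly independent rows. Then $B_P$ has rank $n$, so the affine system $B_P x = (y_0)_P$ has a unique solution; this solution is $x_0$ (which we know satisfies all the constraints), hence it is the unique solution of $y_0 = \relu(Bx)$. For the ``only if'' direction I would prove the contrapositive: if $B_P$ has rank strictly less than $n$ (which covers both the case of fewer than $n$ nonzero entries and the case of $n$ or more nonzero entries but rank-deficient $B_P$), then the affine solution set of $B_P x = (y_0)_P$ is a positive-dimensional affine subspace through $x_0$. The key sub-step is to show this subspace contains more than one point that also satisfies $B_Z x \le 0$. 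Since $x_0$ satisfies $B_Z x_0 \le 0$, and for indices $i \in Z$ we in fact generically have strict inequality $(Bx_0)_i < 0$ (or we can handle equality separately), a sufficiently small perturbation of $x_0$ along the null space of $B_P$ stays in the region $B_Z x \le 0$, producing a second solution. Hence the solution is not unique.

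The main obstacle is the boundary case in the ``only if'' direction where some rows of $B_Z$ satisfy $(Bx_0)_i = 0$ exactly, i.e.\ $x_0$ sits on the boundary of the polyhedron $\{B_Z x \le 0\}$ rather than its interior. In that situation a small perturbation along $\ker B_P$ might violate $B_Z x \le 0$ in one direction. The fix is to observe that a line segment through $x_0$ in $\ker B_P$ (which is nonempty and nontrivial by the rank assumption) has two directions, and for any such row either the perturbation keeps $(Bx)_i \le 0$ on a whole half-segment, or — in the degenerate event that the null space direction is orthogonal to that row — it keeps $(Bx)_i = 0$ on the entire segment; in all cases one can find a nontrivial segment of solutions, so uniqueness fails. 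I should also note the implicit nondegeneracy: uniqueness can only hold when $m \ge n$, which is automatic from the hypothesis that $B_P$ has $n$ linearly independent rows. I would close by remarking that this lemma is exactly what makes large expansion factors ($m \gg n$) desirable: it makes it probable, for random $B$, that the surviving positive coordinates of $\relu(Bx)$ still carry $n$ linearly independent rows and hence lose no information, which is the point illustrated in Figure~\ref{fig:activation_patterns}.
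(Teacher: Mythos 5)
Your setup---splitting the coordinates of $y_0$ into the positive set $P$ and the zero set $Z$, characterizing the solution set as $\{x : B_P x = (y_0)_P\} \cap \{x : B_Z x \le 0\}$, and reducing the ``if'' direction to the rank-$n$ linear system $B_P x = (y_0)_P$---is the same decomposition the paper uses (the paper calls $P$ the set $T$), and your ``if'' direction is correct and matches the paper's. You are also right, and more careful than the paper, in observing that the ``only if'' direction requires showing that a nontrivial perturbation of $x_0$ along $\ker B_P$ can be chosen to respect the inequalities $B_Z x \le 0$; the paper's own proof simply asserts that the under-determined system $y_T = B_T x$ has infinitely many solutions and stops, ignoring these constraints entirely.

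However, your resolution of the boundary case does not work, and the gap is not patchable as stated. When several rows $i \in Z$ are active at $x_0$ (i.e.\ $(Bx_0)_i = 0$), each such row restricts the admissible perturbations $x_0 + t v$ with $v \in \ker B_P$ to a half-segment $\{t : t\,(b_i \cdot v) \le 0\}$, and \emph{different} active rows can force \emph{opposite} half-segments, leaving only $t=0$; your claim that ``in all cases one can find a nontrivial segment of solutions'' fails exactly there. Concretely, take $n=2$, $m=3$, let $B$ have rows $(1,0)$, $(0,1)$, $(0,-1)$, and let $x_0=(1,0)$. Then $y_0=\relu(Bx_0)=(1,0,0)$ has a single nonzero coordinate, so the lemma's condition fails, yet $y_0=\relu(Bx)$ forces $x_1=1$ together with $x_2\le 0$ and $x_2\ge 0$, i.e.\ the unique solution $x=x_0$. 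So the ``only if'' direction is genuinely false in this degenerate configuration, and no argument can close your gap without adding a nondegeneracy hypothesis (e.g.\ $(Bx_0)_i\ne 0$ for every $i$, which holds for almost every $x_0$ when $B$ has no zero rows, and which is the regime the paper implicitly works in). With that hypothesis added, $x_0$ lies in the interior of $\{B_Z x\le 0\}$ and your perturbation argument goes through, giving a complete proof.
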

\begin{proof}
    Denote the set of non-zero coordinates of $y_0$ as $T$ and let $y_T$ and $B_T$ be restrictions of $y$ and $B$ to the subspace defined by $T$.
    If $|T| < n$, we have $y_T = B_T x_0$ where $B_T$ is under-determined with at least one solution $x_0$, thus there are infinitely many solutions. 
    Now consider the case of $|T| \ge n$ and let the rank of $B_T$ be $n$. Suppose there is an additional solution $x_1 \ne x_0$ such that $y_0=\relu (B x_1)$, then we have $y_T = B_T x_0 = B_T x_1$, which cannot be satisfied unless $x_0 = x_1$.
\end{proof}

One of the corollaries of this lemma says that if $m \gg n$, we only need a small fraction of values of $Bx$ to be positive for $\relu (B x)$ to be invertible.

The constraints of the lemma \ref{thm:invertibility-of-relu} can be empirically validated for real networks and real inputs and hence we can be assured that information is indeed preserved. We further show that with respect to initialization, we can be sure that these constraints are satisfied with high probability. Note that for random initialization the conditions of
lemma \ref{thm:invertibility-of-relu} are satisfied due
to initialization symmetries. However even for trained graphs these constraints can be empirically validated by running
the network over valid inputs and verifying that all or most inputs are above the threshold. On Figure \ref{fig:activation_patterns} we show how this distribution looks for  different MobileNetV2 layers. At step $0$ the activation patterns concentrate around having half of the positive channel (as predicted by initialization symmetries). For fully trained network, while the standard deviation grew significantly, all but the two layers are still above the invertibility thresholds. We believe further study of this is warranted and might lead to helpful insights on network design.

\begin{theorem}
	Let $S$ be a compact $n$-dimensional submanifold of $\real^n$.
	Consider a family of functions $f_B(x)=\relu (B x)$ from $\real^n$ to $\real^m$ parameterized by $m\times n$ matrices $B\in \mathcal{B}$.
	Let $p(B)$ be a probability density on the space of all matrices $\mathcal{B}$ that satisfies:
	\begin{itemize}
		\item $P(Z)=0$ for any measure-zero subset $Z\subset \mathcal{B}$;
		\item (a symmetry condition) $p(D B) = p(B)$ for any $B \in \mathcal{B}$ and any $m \times m$ diagonal matrix $D$ with all diagonal elements being either $+1$ or $-1$.
	\end{itemize}
	Then, the average $n$-volume of the subset of $S$ that is collapsed by $f_B$ to a lower-dimensional manifold is 
	$$
		V - \frac{N_{m,n} V}{2^m},
	$$
	where $V = \vol S$ and 
	$$
		N_{m,n} \equiv \sum_{k=0}^{m-n} \binom{m}{k}.
	$$
\end{theorem}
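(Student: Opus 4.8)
The plan is to reduce the statement to counting the sign patterns of $Bx$ and then to use the symmetry of $p$ to show that every pattern contributes the same expected volume.

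First I would dispose of degeneracies: since $P$ vanishes on every measure-zero set, with probability one $B$ has the property that any $n$ of its rows are linearly independent (the bad set is a finite union of zero-loci of $n\times n$ minors). Fix such a $B$, and for a sign vector $\sigma\in\{+1,-1\}^m$ put $\sigma^+=\{i:\sigma_i=+1\}$ and $R_\sigma(B)=\{x\in\real^n:\operatorname{sign}((Bx)_i)=\sigma_i\ \forall i\}$, an open polyhedral cone. The hyperplanes $\{(Bx)_i=0\}$ are Lebesgue-null, so the cones $R_\sigma(B)$ partition $\real^n$, and hence $S$, up to a null set. On $R_\sigma(B)$ the map $f_B$ coincides with a fixed linear map $L_\sigma$, the one whose $i$-th output coordinate is $(Bx)_i$ for $i\in\sigma^+$ and $0$ for $i\notin\sigma^+$. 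By Lemma~\ref{thm:invertibility-of-relu} together with the genericity of $B$, the fiber $f_B^{-1}(f_B(x))$ is $\{x\}$ when $|\sigma^+|\ge n$ and is positive-dimensional when $|\sigma^+|<n$; in the latter case $f_B(S\cap R_\sigma(B))$ also lies in an $|\sigma^+|$-dimensional coordinate subspace. Thus, up to a null set, the subset of $S$ that $f_B$ collapses to a lower-dimensional manifold is $\bigcup_{|\sigma^+|<n}\big(S\cap R_\sigma(B)\big)$, with $n$-volume $\sum_{|\sigma^+|<n}\vol\!\big(S\cap R_\sigma(B)\big)$.

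Next I would symmetrize. For a $\pm1$ diagonal matrix $D$, negating the $i$-th row of $B$ negates $\operatorname{sign}((Bx)_i)$, so $R_\sigma(DB)=R_{D\sigma}(B)$. The change of variables $B\mapsto DB$ preserves Lebesgue measure on $\real^{m\times n}$ and, by hypothesis, fixes $p$, hence $\E_B\!\big[\vol(S\cap R_\sigma(B))\big]=\E_B\!\big[\vol(S\cap R_{D\sigma}(B))\big]$; as $D$ ranges over all sign matrices, $D\sigma$ attains every element of $\{+1,-1\}^m$, so this expectation is a constant $c$ independent of $\sigma$. Summing over the $2^m$ patterns and using the partition of $S$ gives $2^m c=\E_B[\vol S]=V$, so $c=V/2^m$. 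Finally, the number of patterns with $|\sigma^+|<n$ is $\sum_{k=0}^{n-1}\binom{m}{k}=2^m-\sum_{k=0}^{m-n}\binom{m}{k}=2^m-N_{m,n}$, so the expected collapsed $n$-volume is $(2^m-N_{m,n})V/2^m=V-N_{m,n}V/2^m$, as claimed.

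The only genuinely delicate point is the first paragraph: fixing the meaning of ``collapsed to a lower-dimensional manifold'' and checking that, off a null set, this is decided entirely by whether the activation pattern at $x$ has at least $n$ active coordinates — a fiberwise statement that simply packages Lemma~\ref{thm:invertibility-of-relu} with the generic linear independence of the rows of $B$. After that, everything is measure-preserving symmetrization and a one-line binomial identity.
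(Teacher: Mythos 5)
Your proof is correct and follows essentially the same route as the paper's: partition $S$ by the sign pattern of $Bx$ (your cones $R_\sigma(B)$ are exactly the preimages of the paper's quadrants $Q_\sigma$), use the $p(DB)=p(B)$ symmetry to show each pattern contributes expected volume $V/2^m$, and count the patterns with at least $n$ positive coordinates. Your treatment is somewhat more careful about genericity and the reduction to Lemma~\ref{thm:invertibility-of-relu}, but the underlying argument is identical.
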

\begin{proof}
	For any $\sigma=(s_1,\dots,s_m)$ with $s_k \in \{-1,+1\}$, let $Q_\sigma=\{x\in \real^m| x_i s_i > 0\}$ be a corresponding quadrant in $\real^m$.
	For any $n$-dimensional submanifold $\Gamma \subset \real^m$, $\relu$ acts as a bijection on $\Gamma \cap Q_{\sigma}$ if $\sigma$ has at least $n$ positive values\footnote{unless at least one of the positive coordinates for all $x\in \Gamma \cap Q_{\sigma}$ is fixed, which would not be the case for almost all $B$ and $\Gamma = BS$} and contracts $\Gamma \cap Q_{\sigma}$ otherwise.
	Also notice that the intersection of $B S$ with $\real^m \backslash (\cup_\sigma Q_\sigma)$ is almost surely $(n-1)$-dimensional.
	The average $n$-volume of $S$ that is not collapsed by applying $\relu$ to $B S$ is therefore given by:
	\begin{equation}
    	\label{eq:presum}
		\sum_{\sigma\in \Sigma_{n}} \E_{B} [V_\sigma(B)],
	\end{equation}
	where $\Sigma_{n}=\left\{(s_1,\dots,s_m)|\sum_k \theta(s_k) \ge n\right\}$, $\theta$ is a step function and $V_\sigma(B)$ is a volume of the largest subset of $S$ that is mapped by $B$ to $Q_\sigma$.
	Now let us calculate $\E_B [V_\sigma(B)]$.
	Recalling that $p(D B)=p(B)$ for any $D=\diag(s_1,\dots,s_m)$ with $s_k \in \{-1,+1\}$, this average can be rewritten as $\E_B \E_D [V_\sigma(DB)]$.
	Noticing that the subset of $S$ mapped by $D B$ to $Q_\sigma$ is also mapped by $B$ to $D^{-1} Q_\sigma$, we immediately obtain $\sum_{\sigma'} V_\sigma[\diag(\sigma') B] = \sum_{\sigma'} V_{\sigma'}[B] = \vol S$ and therefore $\E_{B} [V_\sigma(B)] = 2^{-m} \, \vol S$.
	Substituting this and $|\Sigma_n|=\sum_{k=0}^{m-n} \binom{m}{k}$ into Eq.~\ref{eq:presum} concludes the proof.
\end{proof}

Notice that for sufficiently large expansion layers with $m \gg n$, the fraction of collapsed space $N_{m,n}/2^m$ can be bounded by:
$$
  \frac{N_{m,n}}{2^m} \ge 1 - \frac{m^{n+1}}{2^m n!} \ge
  1 - 2^{(n+1)\log m - m } \ge 1 - 2^{-m/2}
$$
and therefore $\relu(B x)$ performs a nonlinear transformation while preserving information with high probability.  

We discussed how bottlenecks can prevent manifold collapse, but increasing the size of the bottleneck expansion may also make it possible for the network to represent more complex functions. Following the main results of ~\cite{Montufar2014}, one can show, for example, that for any integer $L\ge 1$ and $p>1$ there exist a network of $L$ $\relu$ layers, each containing $n$ neurons and a bottleneck expansion of size $p n$ such that it maps $p^{nL}$ input volumes (linearly isomorphic to $[0,1]^{n}$) to the same output region $[0,1]^n$.
Any complex possibly nonlinear function attached to the network output would thus effectively compute function values for $p^{nL}$ input linear regions.

\section{Semantic segmentation visualization results}
\label{sec:appendix_segmentation}
\begin{figure*}[!t]
  \centering
  \scalebox{0.9}{
  \begin{tabular}{c}
    \includegraphics[width=0.96\linewidth]{./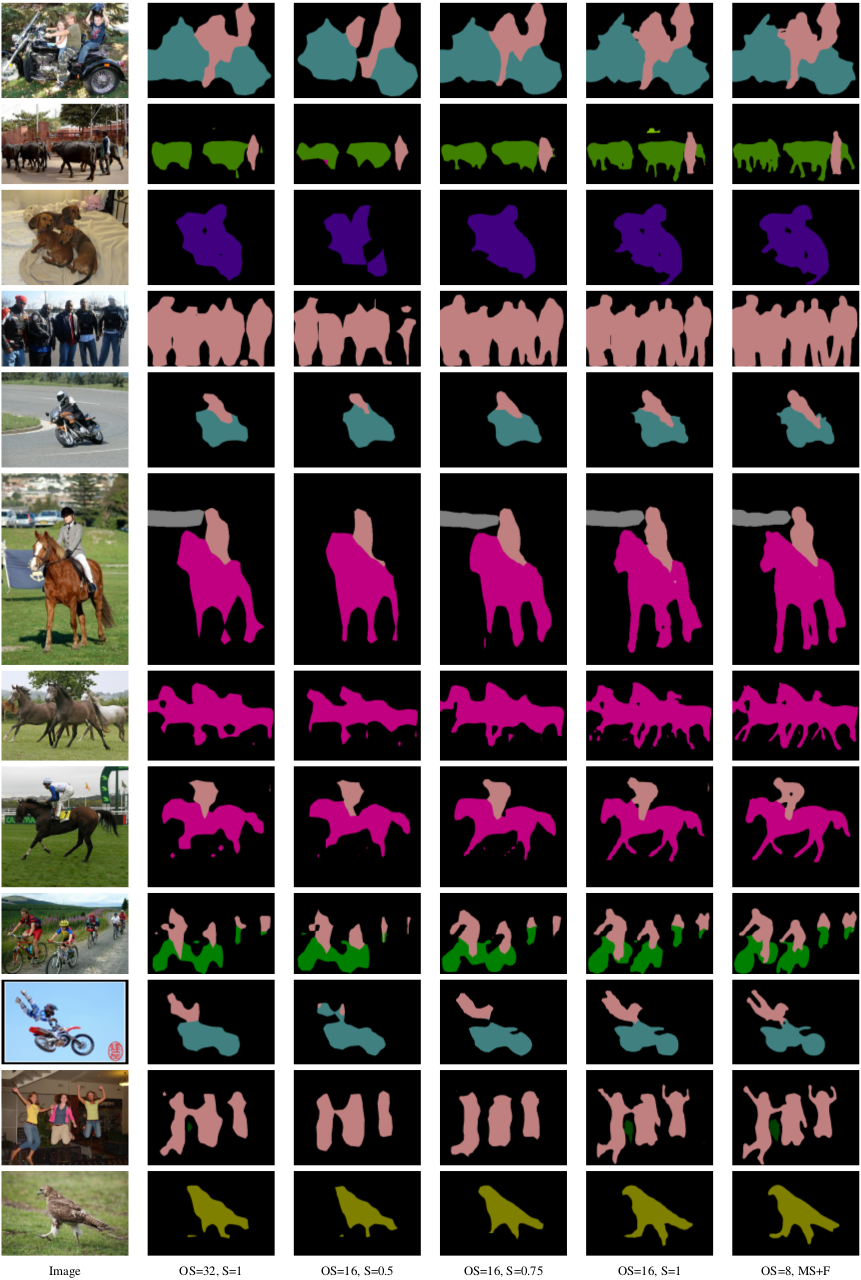} \\
  \end{tabular}
  }
  \caption{\mbox{MobileNetv2} semantic segmentation visualization results on \mbox{PASCAL} VOC 2012 {\it val} set. \textbf{OS}: $\emph{output\_stride}$. \textbf{S}: single scale input. \textbf{MS+F}: Multi-scale inputs with scales = $\{0.5, 0.75, 1, 1.25, 1.5, 1.75\}$ and left-right flipped inputs. Employing $\emph{output\_stride}=16$ and single input scale = 1 attains a good trade-off between FLOPS and accuracy.}
  \label{fig:vis_mobilenetv2}
\end{figure*}
 
\end{document}